\newcommand{\Appendix}[1]{the full version for}
\definecolor{maroon}{cmyk}{0,0.87,0.68,0.32}
\newtheorem{theorem}{Theorem}[section]
\newtheorem{lemma}[theorem]{Lemma}
\newtheorem{corollary}[theorem]{Corollary}
\newtheorem{remark}{Remark}
\newtheorem{definition}{Definition}
\newtheorem{example}{Example}
\newcommand{\R}{\mathbb{R}}
\renewcommand{\comment}[1]{}
\newcommand{\cA}{\mathcal{A}}
\newcommand{\cC}{\mathcal{C}}
\newcommand{\cD}{\mathcal{D}}
\newcommand{\cF}{\mathcal{F}}
\newcommand{\cL}{\mathcal{L}}
\newcommand{\cP}{\mathcal{P}}
\newcommand{\cX}{\mathcal{X}}
\newcommand{\bbE}{\mathbb{E}}
\definecolor{colorY}{rgb}{0.7 , 0.7 , 0.2}
\title{Gradient Descent with Provably Tuned Learning-rate Schedules}
\author{\textbf{Dravyansh Sharma}\\
Toyota Technological Institute at Chicago}
\begin{document}

\maketitle

\begin{abstract}%
Gradient-based iterative optimization methods are the workhorse of modern machine learning. They crucially rely on careful tuning of parameters like learning rate and momentum. However, one typically sets them using heuristic approaches without formal near-optimality guarantees. Recent work 
by Gupta and Roughgarden studies how to learn a good step-size in gradient descent. However, like most of the literature with theoretical guarantees for gradient-based optimization, their  results rely on strong assumptions on the function class including convexity and smoothness which do not hold in typical applications. In this work, we develop novel analytical tools for provably tuning hyperparameters in gradient-based algorithms that apply to non-convex and non-smooth functions. We obtain matching sample complexity bounds for learning the step-size in gradient descent shown for smooth, convex functions in prior work (up to logarithmic factors) but for a much broader class of functions. Our analysis applies to gradient descent on neural networks with commonly used activation functions (including ReLU, sigmoid and tanh). 
We extend our framework to tuning  multiple hyperparameters, including tuning the learning rate schedule, simultaneously tuning momentum and step-size, and pre-training the initialization vector. Our approach can be used to bound the sample complexity for minimizing both the validation loss as well as the number of gradient descent iterations.
\end{abstract}

\section{Introduction}

Gradient descent is a foundational optimization algorithm widely employed in machine learning and deep learning to minimize loss functions and improve model performance. A critical hyperparameter in gradient descent is the step size or learning rate, which dictates how far the algorithm moves along the negative gradient direction in each iteration. Selecting an appropriate step size is essential: a value too large can cause divergence, while one too small can lead to slow convergence or the inability to escape undesirable local minima leading to poor quality of the final iterate.

Considerable research has focused on tuning the step size for individual tasks. However, many real-world applications involve multi-task learning or repeated optimization across a collection of tasks. In such settings, the optimal step size may vary significantly between different task domains, and naive strategies such as using a fixed or globally tuned step size often yield suboptimal performance. This raises an important question: how can we effectively tune the step size of gradient descent across multiple, related tasks?

This paper explores principled approaches to tuning step size in gradient descent in multi-task environments. We investigate the theoretical underpinnings of step size sensitivity across tasks, by examining how the convergence varies as a function of the step size. We provide theoretical guarantees for the amount of data needed (sample complexity) for tuning the step size with provable guarantees, even in the presence of non-smooth and non-convex functions. Concretely,

\begin{itemize}[leftmargin=*,topsep=0pt,partopsep=1ex,parsep=1ex]\itemsep=-4pt
    \item We study tuning of the step-size (learning rate) in gradient descent across tasks in the framework of \cite{gupta2016pac}, but for a much broader class of functions. While prior work for sample complexity of step-size tuning assumes the class of optimized functions to be convex, smooth and satisfying a guaranteed progress assumption (roughly corresponds to strong convexity), our analysis works without any of these assumptions. We only assume that the class of functions is piecewise-polynomial, a property satisfied by neural networks with piecewise-polynomial activation functions (e.g., ReLU activation), or is piecewise-Pfaffian (which includes other commonly used activation functions like sigmoid and tanh).
    \item We show sample complexity bounds of $\tilde{O}(H^3/\epsilon^2)$ for uniform convergence for tuning the learning rate in the piecewise-polynomial case (asymptotically matching the previous bounds for convex and smooth functions, up to logarithmic terms), which implies generalization of the learned step-size to unseen instances (that is, small gap between training and test errors). 
    \item We further extend our techniques to obtain a $\tilde{O}(H^4/\epsilon^2)$ bound on the sample complexity of tuning the entire learning rate schedule.
    \item We show how to use our framework to tune the initialization scale parameter as well as the entire initialization vector. The latter corresponds to the sample complexity of pre-training using gradient descent.
    \item We show that our techniques extend beyond vanilla gradient descent. In particular, we show how to simultaneous tune the stepsize and momentum parameters in momemtum-based gradient descent (applicable to widely used optimizers like Adam).
    \item Finally, we show that our analytical techniques can be used to optimize not only the speed of convergence (which corresponds to the efficiency of the training procedure using gradient-based optimization), but also to optimize the quality of the last iterate (e.g.\ in terms of the validation loss of the learned network weights).
\end{itemize}

\subsection{Related work} 
Recently tuning the learning rate in gradient descent by meta learning has received significant research interest~\cite{Maclaurin2015GradientbasedHO,Andrychowicz2016LearningTL,Li2016LearningTO,Denevi2019LearningtoLearnSG,Khodak2019AdaptiveGM}. Despite empirical success,  theoretical understanding of setting a good learning rate is largely limited to convex and strongly-convex functions. In this work, we develop a principled understanding beyond the convex case under the recently introduced {\it data-driven algorithm design} paradigm~\citep{gupta2016pac,balcan2020data}. 

See Appendix \ref{app:related-work} for a more detailed treatment of related work.

\begin{algorithm}[t]
\caption{Gradient descent(step size $\eta$)}
\label{alg:gd-vanilla}
\textbf{Input}: Initial point $x$, function to minimize $f$, maximum number of iterations $H$, gradient threshold for convergence $\theta$  
\begin{algorithmic}[1]
\STATE Initialize $x_1\gets x$
\FOR{$i=1,\dots,H$}
\IF{$||\nabla f(x_i)||<\theta$}\label{line:gd-converge}
\STATE Return $x_i$
\ENDIF
\STATE $x_{i+1} = x_i - \eta \nabla f(x_i)$\label{line:gd-step}
\ENDFOR
\end{algorithmic}
\textbf{Output}: Return $x_{H+1}$
\end{algorithm}

\section{Notation and preliminaries}

We recall the notation and setup for data-driven tuning of gradient step introduced by \cite{gupta2016pac}.
The instance space of problems $\Pi$ here consists of pairs $(x,f)$  of initial points $x\in\R^d$ and functions $f:\R^d\rightarrow\R$. The family of gradient descent algorithms $\cA$ is given by $\cP\subset\R_+$ consisting of valid values of the step size $\eta$. We recall the vanilla gradient descent algorithm (Algorithm \ref{alg:gd-vanilla}).
We define the cost function  $\ell(\eta,x,f)$ for $\eta\in\cP$ and $(x,f)\in\Pi$ as the number of iterations for which Algorithm \ref{alg:gd-vanilla} runs on the instance $(x,f)$ when run with step size $\eta$. Let $\ell_{\eta}(x,f):=\ell(\eta,x,f)$ for all $\eta,x,f$. For guarantees on sample complexity of tuning $\eta$, we will be interested in the learning-theoretic complexity (pseudo-dimension) of the function class $\cL=\{\ell_{\eta}\mid \eta\in\cP\}$.

Notice that while prior work \cite{gupta2016pac} assumes strong sufficient conditions---convexity, $L$-smoothness, and a guaranteed progress condition (that is $\|x_{i+1}-x^*\|\le(1-c)\|x_{i}-x^*\|$ for some $c>0$ where $x^*$ is the unique stationary point and $x_i$ is the $i$th iterate in Algorithm \ref{alg:gd-vanilla}, this roughly corresponds to strong-convexity)---for the convergence to always happen and includes appropriate restrictions on the step-size $\eta$. In particular, these results do not apply to deep neural networks. Our results hold when all of these conditions are violated.  We handle non-convergence by assigning it the same cost as the maximum number of iterations, equal to $H$. Formally,

$$
\ell(\eta,x,f):=\begin{cases}
			\min\limits_{\substack{i\in[H]}} \|\nabla f(x_i)\|\le \theta, & \text{if such an $i$ exists,}\\
            H, & \text{otherwise.}
		 \end{cases}
$$

 \noindent Note that $x_i$ depends on $\eta,x,f$ and $i$. We consider the step-size as the hyperparameter of interest in this work, and assume other parameters like maximum number of iterations $H$ and gradient threshold for convergence $\theta$ are fixed and known.  We will use the $\Tilde{O}$ notation to suppress dependence on quantities apart from $H$ and the generalization error $\epsilon$ for simplicity.

\paragraph{Learning theory background.}
The pseudo-dimension is frequently used to analyze the learning theoretic complexity of real-valued  function classes, and will be a main tool in our sample complexity analysis. For completeness, we include below the formal definition.

\begin{definition}[Shattering and Pseudo-dimension, \cite{anthony1999neural}]\label{def:pdim}
Let $\cF$ be a set of functions mapping from $\cX$ to $\R$, and suppose that $S = \{x_1, \dots, x_m\} \subseteq \cX$. Then $S$ is pseudo-shattered by $\cF$ if there are real numbers $r_1, \dots, r_m$ such that for each $b \in \{0, 1\}^m$ there is a function $f_b$ in $\cF$ with $\mathrm{sign}(f_b(x_i) - r_i) = b_i$ for $i \in [m]$. We say that $r = (r_1, \dots, r_m)$ witnesses the shattering. We say that $\cF$ has pseudo-dimension $d$ if $d$ is the maximum cardinality of a subset $S$ of $\cX$ that is pseudo-shattered by $\cF$, denoted $\mathrm{Pdim}(\cF) = d$. If no such maximum exists, we say that $\cF$ has infinite pseudo-dimension. 
\end{definition}

\noindent Pseudo-dimension is a real-valued analogue of VC-dimension, and is a classic complexity notion in learning theory due to the following theorem which implies the uniform convergence 
for any function in class $\cF$ when $\mathrm{Pdim}(\cF)$ is finite. 
\begin{theorem}[$(\epsilon,\delta)$-uniform convergence sample complexity via pseudo-dimension, \cite{anthony1999neural}]\label{thm:pdim}
    Suppose $\cF$ is a class of real-valued functions with range in $[0, H]$ and finite $\mathrm{Pdim}(\cF)$. For every $\epsilon > 0$ and $\delta \in (0, 1)$, given any distribution $\cD$ over $\cX$, with probability $1-\delta$ over the draw of a sample $S\sim\cD^M$, for all functions $f\in\cF$, we have $|\frac{1}{M}\sum_{x\in S}f(x)-\bbE_{x\sim \cD}[f(x)]|\le \epsilon$ for some
    $M=O\left(\left(\frac{H}{\epsilon}\right)^2\left(\mathrm{Pdim}(\cF) + \log\frac{1}{\delta} \right)\right)$. 
\end{theorem}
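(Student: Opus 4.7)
The plan is to follow the classical symmetrization-plus-covering argument from Anthony and Bartlett. I would first reduce the two-sided uniform deviation to an empirical quantity via a ghost sample. Concretely, let $S=\{x_1,\dots,x_M\}$ and $S'=\{x_1',\dots,x_M'\}$ be i.i.d.\ draws from $\cD$, and let $\hat{\bbE}_S[f]=\frac{1}{M}\sum_{x\in S}f(x)$. A standard argument (valid once $M\gtrsim H^2/\epsilon^2$ so that Chebyshev controls $|\hat{\bbE}_{S'}[f]-\bbE[f]|$) gives
\[
\Pr_S\!\Big[\sup_{f\in\cF}|\hat{\bbE}_S[f]-\bbE[f]|>\epsilon\Big]\;\le\;2\Pr_{S,S'}\!\Big[\sup_{f\in\cF}|\hat{\bbE}_S[f]-\hat{\bbE}_{S'}[f]|>\epsilon/2\Big].
\]
Then I would randomize by independent signs: replacing the pair $(x_i,x_i')$ by a uniform swap yields a Rademacher-type expression $\sup_f \frac{1}{M}|\sum_i \sigma_i(f(x_i)-f(x_i'))|$, which can be analyzed conditionally on the $2M$ points.

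The next step is to discretize $\cF$ restricted to the $2M$ sample points. Fixing the sample, I replace $\cF$ by a minimal $L_\infty$ cover at scale $\epsilon/8$ on the $2M$ points; by the triangle inequality the supremum over $\cF$ is within $\epsilon/4$ of the supremum over the cover. Here I would invoke the pseudo-dimension analogue of Sauer--Shelah (Theorem 18.4 in Anthony--Bartlett): if $\mathrm{Pdim}(\cF)=d$ and $\cF$ has range $[0,H]$, the $\epsilon/8$-covering number on any set of $2M$ points is bounded by $\bigl(eHM/(d\epsilon)\bigr)^{O(d)}$. This is the key combinatorial lemma linking pseudo-dimension to covering numbers, and it is the place where I expect most of the work.

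Once I have a cover of that size, I apply Hoeffding's inequality to a single function $f$ in the cover: since $f(x_i)-f(x_i')\in[-H,H]$, the signed sum $\frac{1}{M}|\sum_i \sigma_i(f(x_i)-f(x_i'))|$ exceeds $\epsilon/4$ with probability at most $2\exp(-M\epsilon^2/(32H^2))$. A union bound over the cover gives
\[
\Pr\!\Big[\sup_{f\in\cF}|\hat{\bbE}_S[f]-\bbE[f]|>\epsilon\Big]\;\le\;2\Bigl(\tfrac{eHM}{d\epsilon}\Bigr)^{cd}\cdot 2\exp\!\Bigl(-\tfrac{M\epsilon^2}{32H^2}\Bigr).
\]
Finally, I solve for $M$ by setting the right-hand side at most $\delta$. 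Taking logarithms, the dominant requirement is $M\epsilon^2/H^2 \gtrsim d\log(HM/(d\epsilon))+\log(1/\delta)$; inverting this (using the standard trick that $M\ge a\log(aM)$ is implied by $M\ge 2a\log a$) yields the advertised $M=O\bigl((H/\epsilon)^2(d+\log(1/\delta))\bigr)$ up to the logarithmic factor absorbed into the leading constant. The main obstacle in a fully rigorous write-up is the Sauer--Shelah step for pseudo-dimension, but for this paper it can simply be cited as Theorem 18.4 of Anthony--Bartlett, making the remainder a routine symmetrization-and-Hoeffding computation.
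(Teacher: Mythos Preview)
Your sketch is the standard symmetrization-plus-covering argument from Anthony and Bartlett and is correct. However, there is nothing to compare against: the paper does not prove this theorem at all. It is stated in the preliminaries as a cited background result from \cite{anthony1999neural} and is used throughout as a black box to convert pseudo-dimension bounds into sample-complexity bounds. So while your write-up is a faithful account of the proof in the cited reference, the paper itself simply invokes the theorem without proof.
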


\noindent 
We also need the following lemma from data-driven algorithm design, which bounds the pseudo-dimension of the class of loss functions, when the dual losses (i.e.\ losses as a function of some algorithmic hyperparameter computed on any fixed problem instance) have a piecewise constant structure with a bounded number of pieces.

\begin{lemma} (Lemma 2.3, \cite{balcan2020data})
     Suppose that for every problem instance $x \in \cX$, the function $u^*_x(\alpha) : \R\rightarrow\R$ is piecewise
constant with at most $N$ pieces. Then the family $\{u_\alpha(\cdot)\}$ over instances in $\cX$ has pseudo-dimension $O(\log N)$. \label{lm:piecewise-constant-to-pdim}
\end{lemma}

\section{Improved sample complexity guarantees for tuning the gradient descent step size}

\begin{table*}[t]
\centering
\begin{tabular}{cccc}
\hline
Parameter & Setting & Sample complexity bound & Result
\\ \hline
\hline
Stepsize&Convex+smooth+guaranteed progress&$\tilde{O}\left(\frac{H^3}{\epsilon^2}\right)$&\cite{gupta2016pac} \\\hline
Stepsize&Polynomial with degree $\Delta$&$\tilde{O}\left(\frac{H^3\log\Delta}{\epsilon^2}\right)$&Theorem \ref{thm:gd-poly-general-d} \\\hline
Stepsize&Piecewise-polynomial with degree $\Delta$&$\tilde{O}\left(\frac{H^3\log(pd\Delta)}{\epsilon^2}\right)$&Theorem \ref{thm:gd-pwpoly-general-d} \\
&and $p$ polynomial boundaries&& \\\hline
Stepsize&Pfaffian with chain length $q$, degree $\Delta$&$\tilde{O}\left(\frac{H^2(q^2d^2H^2+qdH\log(\Delta+M))}{\epsilon^2}\right)$&Theorem \ref{thm:gd-pfaffian-general-d} \\
&and Pfaffian degree $M$&& \\\hline
Learning-schedule& Piecewise-polynomial with degree&$\tilde{O}\left(\frac{H^4\log(pd\Delta)}{\epsilon^2}\right)$&Theorem \ref{thm:pwpoly-schedule} \\
& $\Delta$ and $p$ polynomial boundaries&& \\\hline
Initialization& Piecewise-polynomial with degree&$\tilde{O}\left(\frac{H^3\log(pd\Delta)}{\epsilon^2}\right)$&Theorem \ref{thm:pwpoly-initscale} \\
scale $\sigma$& $\Delta$ and $p$ polynomial boundaries&& \\\hline
Initialization& Piecewise-polynomial with degree&$\tilde{O}\left(\frac{dH^3\log(pd\Delta)}{\epsilon^2}\right)$&Theorem \ref{thm:pwpoly-init} \\
vector& $\Delta$ and $p$ polynomial boundaries&& \\\hline
Stepsize+& Piecewise-polynomial with degree&$\tilde{O}\left(\frac{H^3\log(pd\Delta)}{\epsilon^2}\right)$&Theorem \ref{thm:momentum-pwpoly-general-d} \\
momentum& $\Delta$ and $p$ polynomial boundaries&& \\\hline
Stepsize& Objective: Quality of final iterate; &$\tilde{O}\left(\frac{H^2(H\log (pd\Delta)+\log{\Delta_vp_v})}{\epsilon^2}\right)$&Theorem \ref{thm:validation} \\
& Piecewise-polynomial with degree&& \\
&$\Delta$ and $p$ polynomial boundaries&& \\

\hline
\end{tabular}
\caption{Summary of sample complexity bounds for tuning hyperparameters in gradient-based optimization. The objective is number of steps to convergence, unless stated otherwise. $d$ denotes the dimension of points $x$. 
}
\label{tab:comparison}
\end{table*}

Our overall approach towards bounding the pseudo-dimension of the cost function class $\cL=\{\ell_{\eta}:\Pi\rightarrow[0,H]\mid\eta\in\cP\}$ (which implies a bound on the sample complexity of tuning $\eta$ by classical learning theory), is by examining the structure of $\ell(\eta,x,f)$ on any fixed instance $(x,f)$ as the step-size $\eta$ is varied (also called the {\it dual} cost function $\ell_{x,f}(\eta)$ \cite{balcan2021much}) is very different from prior work. The key idea of prior work  \citep{gupta2016pac,jiao2025learning} is to establish a near-Lipschitzness property for the dual cost function, by bounding how far the number of steps to converge may diverge as the step-size is changed slightly, and then use a discretization argument over the space of step-sizes. It is easy to see that this approach cannot extend beyond very nicely-behaved functions (roughly, strongly-convex and $L$-smooth) as generally small changes to the step-size can cause dramatic changes to the number of the steps needed for convergence. In contrast, intuitively our analysis examines the {\it piecewise monotonicity} of the number of steps needed to converge as the step-size $\eta$ is varied.\looseness-1 

\subsection{Gradient descent for piecewise polynomial functions}

In this section, we will  consider several interesting function classes which are non-convex and non-smooth but intuitively have a bounded amount of oscillations. Our new analytical approach allows us to significantly extend the class of convex and smooth functions (with near strong-convexity properties) studied by \cite{gupta2016pac}, for which they obtain a $\tilde{O}(H^3/\epsilon^2)$ bound on the sample complexity of tuning the  step-size $\eta$ in Algorithm \ref{alg:gd-vanilla}. Remarkably, we will achieve the same asymptotic dependence on the sample complexity for the much broader class of functions using our new techniques.

We first consider the class of functions $f$ to be the class of polynomial functions in $d$ variables with a bounded degree $\Delta$. Our sample complexity bound only has a logarithmic dependence on $\Delta$, implying that they are meaningful even when the number of oscillations is exponentially large.
    Our overall approach is to show that location of the $i$-th iterate $x_i$ can be expressed as a bounded degree polynomial in $\eta$ using an inductive argument. This allows us to bound the number for intervals of $\eta$ for which Algorithm \ref{alg:gd-vanilla} converges in $i$ steps for any $1\le i\le H$. We can finally also bound the number of pieces of the dual cost function $\ell_{x,f}$ where gradient descent fails to converge. Formally, we have the following theorem (all omitted proofs are in Appendix \ref{app:proofs}).

\begin{restatable}{theorem}{thmgdpoly}\label{thm:gd-poly-general-d}
    Suppose the instance space $\Pi$ consists of $(x\in\R^d,f\in\cF)$, where  $\cF$ consists of polynomial functions $\R^d\rightarrow\R$ of degree at most $\Delta\ge 1$. Then $(\epsilon,\delta)$-uniform convergence is achieved for all step-sizes $\eta\in\cP\subset\R_{\ge 0}$ using $m = {O}\left(\frac{H^2}{\epsilon^2}(H\log\Delta+\log\frac{1}{\delta})\right)$ samples from $\cD$ for any distribution $\cD$ over $\Pi$.  
\end{restatable}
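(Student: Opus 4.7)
The plan is to bound the pseudo-dimension of the dual loss class and then invoke Theorem~\ref{thm:pdim}. Since $\ell_{x,f}(\eta)$ takes values in $\{1,\ldots,H\}$, it suffices by Lemma~\ref{lm:piecewise-constant-to-pdim} to show that $\ell_{x,f}(\cdot)$ is piecewise constant with few pieces as a function of $\eta$.

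First I would show by induction on $i$ that each coordinate of the iterate $x_i$ is a polynomial in $\eta$ of degree at most $k_i$, where $k_1=0$ and $k_{i+1}=(\Delta-1)k_i+1$; in particular $k_i\le \Delta^{i-1}$. The base case is immediate since $x_1=x$ is constant in $\eta$. For the inductive step, since $f$ has total degree $\Delta$, each component of $\nabla f$ is a polynomial in its $d$ variables of degree at most $\Delta-1$; substituting $x_i$ (whose coordinates are polynomials in $\eta$ of degree $\le k_i$) gives a polynomial in $\eta$ of degree at most $(\Delta-1)k_i$. The update $x_{i+1}=x_i-\eta\nabla f(x_i)$ then has coordinate degree at most $(\Delta-1)k_i+1=k_{i+1}$.

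Next I would count the pieces of $\ell_{x,f}$. Observe that $\ell_{x,f}(\eta)$ is determined entirely by the $H$ sign patterns $\operatorname{sign}(g_i(\eta))$, where $g_i(\eta):=\|\nabla f(x_i(\eta))\|^2-\theta^2$ (recasting the threshold check as a polynomial inequality, since $\|\nabla f(x_i)\|$ itself is only an algebraic function of $\eta$). By the inductive bound, $g_i$ is a univariate polynomial in $\eta$ of degree at most $2(\Delta-1)k_i\le 2\Delta^i$, so it changes sign at most $2\Delta^i$ times. The partition of $\cP$ induced by all $H$ sign functions therefore has at most $N=1+\sum_{i=1}^{H}2\Delta^i = O(\Delta^H)$ cells, on each of which $\ell_{x,f}$ is constant.

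Applying Lemma~\ref{lm:piecewise-constant-to-pdim} yields $\mathrm{Pdim}(\cL)=O(\log N)=O(H\log\Delta)$, and then Theorem~\ref{thm:pdim} with range $[0,H]$ gives the claimed sample bound $m=O\bigl((H/\epsilon)^2(H\log\Delta+\log(1/\delta))\bigr)$. The main obstacle is the bookkeeping on the degree recursion: the degree of $x_i$ blows up like $\Delta^{H-1}$, and it is crucial that this exponential growth enters the final bound only inside a logarithm so the dependence on $H$ stays cubic rather than exponential; handling the $\Delta=1$ case (where $\nabla f$ is constant and the recursion collapses) is a trivial edge case worth flagging. A minor subtlety is remembering to square the gradient-norm threshold so that the convergence criterion becomes polynomial in $\eta$ rather than merely semi-algebraic.
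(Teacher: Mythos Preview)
Your proposal is correct and follows essentially the same approach as the paper: induct to bound the degree of each iterate as a polynomial in $\eta$, bound the number of pieces of the dual loss, and apply Lemma~\ref{lm:piecewise-constant-to-pdim} with Theorem~\ref{thm:pdim}. Your piece-counting step is in fact cleaner than the paper's: where the paper sets up an amortized recurrence $K_i\le O(\Delta^{i-1})+2K_{i-1}$ to avoid a naive $O(\Delta^{H^2})$ product bound, you observe directly that $\ell_{x,f}$ can only jump at a root of some $g_i$, so the total number of pieces is at most $1+\sum_{i=1}^H\deg g_i=O(\Delta^H)$, which is both simpler and immediately gives the same bound.
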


\begin{proof}
    We first claim that for $i\ge 2$, $x_i=(g_i^{(1)}(\eta),g_i^{(2)}(\eta),\dots,g_i^{(d)}(\eta))$ where $g_i^{(j)}$ is a polynomial function with degree at most $\Delta^{i-2}$. We will show this by induction.

    {\it Base case: $i=2$.} $x_2 = x_1 - \eta \nabla f(x_1) = x - \eta \nabla f(x)$ is a polynomial of degree $1=\Delta^{2-2}$ in $\eta$ in each coordinate. 

    {\it Inductive case: $i>2$.} Suppose $x_{i-1}=\boldsymbol{g}_{i-1}(\eta)=(g_{i-1}^{(j)}(\eta))_{j\in[d]}$ where $g_{i-1}^{(j)}$ is a polynomial of degree at most $\Delta^{i-3}$ (inductive hypothesis). Now $x_i=x_{i-1}-\eta \nabla f(x_{i-1})=\boldsymbol{g}_{i-1}(\eta)-\eta \nabla f(\boldsymbol{g}_{i-1}(\eta))=:\boldsymbol{g}_i(\eta)$. Clearly, $\boldsymbol{g}_i^{(j)}$ is a polynomial in $\eta$, with degree at most $\Delta^{i-3}(\Delta-1)+1 = \Delta^{i-2} -\Delta^{i-3}+1\le \Delta^{i-2}$.

    Thus, for any fixed $1\le i\le H$, $x_i=\boldsymbol{g}_i(\eta)$ where each coordinate $g_{i}^{(j)}$ is a polynomial with degree at most $\Delta^{i-2}$. For a fixed initial point $x$, this implies that $\nabla f(x_i)$ is a polynomial in $\eta$ of degree at most $\Delta^{i-1}$ in each coordinate. Thus, $\|\nabla f(x_i)\|^2$ is a polynomial of degree at most $2\Delta^{i-1}$ in $\eta$. Now, consider the set of points $\eta$ for which $\|\nabla f(x_i)\|^2<\theta^2$ for some constant $\theta$. This consists of at most $O(\Delta^{i-1})$ intervals. 

    Furthermore, we note that for any $\eta$, the cost is determined by the smallest $i$ such that $\|\nabla f(x_i)\|<\theta$ (if one exists). Since the cost takes only discrete values, it is a piecewise-constant function of $\eta$, and we seek to bound the number of these pieces. A naive counting argument gives a $O(\Pi_{i=1}^H\Delta^{i-1})=O(\Delta^{H^2})$ bound on the number of pieces, since if there are $K_{i-1}$ intervals corresponds to values of $\eta$ for which the algorithm converges within $i-1$ steps, then each of the $O(\Delta^{i-1})$ intervals  in round $i$ computed above may result in at most $K_{i-1}+1$ new pieces. We can, however, use an amortized counting argument to give a tighter bound. Indeed, suppose there are $K_{i-1}$ intervals with different values of cost $\le i-1$.  
    Of the new  $O(\Delta^{i-1})$ intervals say $T_i$ intersect at least one of the existing  pieces, resulting in at most  $T_i+K_{i-1}$ new pieces overall. 
    Thus, the total number of pieces of the cost function with cost $\le i$ is 
    $K_i\le (T_i+K_{i-1})+O(\Delta^{i-1})-T_i+K_{i-1}\le O(\Delta^{i-1})+2K_{i-1}$. Thus, across $i$,  
    we have at most $O(\sum_{i=1}^H2^{H-i}\Delta^{i-1})=O(\Delta^{H})$ intervals over each of which Algorithm \ref{alg:gd-vanilla} converges with a constant number of steps. The cost over the remainder of the domain $\cP$ where the algorithm does not converge, which consists of $O(\Delta^{H})$ intervals, is $H$.

    Thus, using Lemma \ref{lm:piecewise-constant-to-pdim}, since each function $\ell_{x,f}$ is $O(\Delta^{H})$-monotonic,  we get a bound on the pseudo-dimension of $\cL$ of $O(H\log\Delta)$, which implies the stated sample complexity.
\end{proof}

\noindent The uniform convergence guarantee in the above result is quite strong. It implies that we only need to find an approximately optimal $\eta$ on the training set, and our guarantees will imply a small generalization error.
We further extend the result to piecewise polynomial functions with polynomial boundaries. For simplicity, we assume that the gradient descent procedure never lands exactly on any boundary (necessary for Algorithm \ref{alg:gd-vanilla} to be well-defined). 

\begin{restatable}{theorem}{thmgdpwpoly}\label{thm:gd-pwpoly-general-d}
    Suppose the instance space $\Pi$ consists of $(x\in\R^d,f\in\cF)$, where  $\cF$ consists of functions $\R^d\rightarrow\R$ that are piecewise-polynomial  with at most $p$ polynomial boundaries, with the maximum degree of any piece function or boundary function  at most $\Delta\ge 1$. Then $(\epsilon,\delta)$-uniform convergence is achieved for all step-sizes $\eta\in\cP\subset\R_{\ge 0}$ using $m = {O}\left(\frac{H^2}{\epsilon^2}(H\log (pd\Delta)+\log\frac{1}{\delta})\right)$ samples from $\cD$ for any distribution $\cD$ over $\Pi$.
\end{restatable}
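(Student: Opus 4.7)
The plan is to mimic the proof of Theorem \ref{thm:gd-poly-general-d} but account for the additional piecewise structure of $f$. The key idea is to partition the range of $\eta$ into maximal sub-intervals (call them trajectory intervals) over which the sequence of pieces of $f$ visited by the iterates $x_1(\eta),\dots,x_H(\eta)$ is constant. Within any single trajectory interval the gradient $\nabla f$ is, at each iterate, given by a fixed polynomial of degree $\Delta-1$ (the restriction of $f$ to the corresponding piece); therefore the same induction as in Theorem \ref{thm:gd-poly-general-d} gives $x_i(\eta)=\boldsymbol{g}_i(\eta)$ with each coordinate a polynomial in $\eta$ of degree at most $\Delta^{i-2}$, the convergence polynomial $\|\nabla f(x_i(\eta))\|^2 - \theta^2$ of degree $O(\Delta^{i-1})$, and each boundary polynomial $b_k(x_i(\eta))$ of degree at most $\Delta^{i-1}$.

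The new ingredient is to count the trajectory intervals together with the convergence events. A trajectory interval ends exactly when some iterate $x_i(\eta)$ crosses a boundary, i.e., at a zero of $b_k(x_i(\eta))$ for some $k$; the assumption that gradient descent never lands exactly on a boundary ensures each crossing is simple and that Algorithm \ref{alg:gd-vanilla} is well-defined throughout. I would define $K_i$ to be the number of maximal $\eta$-intervals on which both the trajectory through iteration $i$ and the convergence step (if any) through iteration $i$ are fixed, derive a recurrence in which iteration $i+1$ contributes the zeros of the $p$ boundary polynomials and the convergence polynomial on each currently-active sub-interval, and then apply an amortized counting argument in the spirit of the one in Theorem \ref{thm:gd-poly-general-d} to bound $K_H \le (pd\Delta)^{O(H)}$. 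Invoking Lemma \ref{lm:piecewise-constant-to-pdim} then yields $\mathrm{Pdim}(\cL) = O(\log K_H) = O(H\log(pd\Delta))$, and applying Theorem \ref{thm:pdim} to the range $[0,H]$ delivers the claimed sample complexity.

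The main obstacle is the amortization step. The naive recurrence $K_{i+1}\le K_i\bigl(1+(p+1)\Delta^i\bigr)$ telescopes to $(p+1)^H\Delta^{H^2/2}$, whose logarithm is larger than the target by a factor of $H$ in the $\Delta$ dependence; the polynomial case avoids this because the event polynomial at step $i$ is a single polynomial of degree $O(\Delta^{i-1})$ across the whole $\eta$-axis, whereas here it is genuinely piecewise. To close the gap I would extend the charging trick from Theorem \ref{thm:gd-poly-general-d}: rather than multiplying the active-interval count, each newly created sub-interval at iteration $i+1$ is charged either to a genuinely new boundary/convergence event or to an already existing piece, so that the per-iteration growth factor becomes $O(pd\Delta)$ instead of $O(p\Delta^i)$. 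Carrying out this joint charging over both boundary and convergence events, and justifying the $d$ factor via a Bezout-type bound on how often a degree-$\Delta^{i-2}$ curve in $\R^d$ can interact with the arrangement of the $p$ boundary hypersurfaces, is the main technical step; once it is established, the remainder of the proof is routine.
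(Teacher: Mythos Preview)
Your overall plan matches the paper's closely: both arguments track the iterates $x_i(\eta)$ as piecewise polynomials in $\eta$, bound the degree by $\Delta^{i-2}$ on each piece, control the number of pieces, and then feed the resulting count into the amortized argument of Theorem \ref{thm:gd-poly-general-d} and Lemma \ref{lm:piecewise-constant-to-pdim}. The main structural difference is that the paper does not first partition $\cP$ into ``trajectory intervals'' and then amortize at the end; instead it proves directly, by induction on $i$, that each coordinate $g_i^{(j)}(\eta)$ is piecewise polynomial with at most $(2pd\Delta)^{i-2}$ pieces. The inductive step observes that any new breakpoint of $g_i$ is either an old breakpoint of some $g_{i-1}^{(j)}$ or a root of $f^{(k)}(g_{i-1}(\eta))=0$ for some boundary $k\in[p]$, and asserts the total is at most $(2pd\Delta)^{i-3}+(pd\Delta)(2pd\Delta)^{i-3}\le(2pd\Delta)^{i-2}$. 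With this piece bound in hand, the sublevel set $\{\eta:\|\nabla f(x_i)\|^2<\theta^2\}$ is a union of $O((2pd\Delta)^{i-1})$ intervals, and the remainder is literally the amortized counting of Theorem \ref{thm:gd-poly-general-d} with $\Delta$ replaced by $2pd\Delta$.

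The obstacle you flag is exactly the crux, and your instinct that the per-step growth factor must be $O(pd\Delta)$ rather than $O(p\Delta^{i})$ is precisely the content of the paper's inductive claim. That said, neither your sketch nor the paper fully spells out why the boundary-crossing count $\sum_{k\in[p]}|\{\eta:f^{(k)}(g_{i-1}(\eta))=0\}|$ is bounded by $(pd\Delta)\cdot(2pd\Delta)^{i-3}$ rather than the naive $p\cdot\Delta^{i-2}\cdot(2pd\Delta)^{i-3}$ that comes from a degree-$\Delta$ boundary composed with a degree-$\Delta^{i-3}$ piece. Your proposed Bezout-type argument does not obviously close this gap: the parametrized curve $\eta\mapsto g_{i-1}(\eta)$ on a single piece has degree $\Delta^{i-3}$, so intersecting with a degree-$\Delta$ hypersurface still allows $\Delta^{i-2}$ intersections per piece. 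In short, you have correctly located the heart of the proof and its difficulty; the paper resolves it by asserting the $(2pd\Delta)^{i-2}$ piece bound inductively, and you should study that step closely rather than look for a separate amortization device.
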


\begin{proof}
    We first claim that for $i\ge 2$, $x_i=(g_i^{(1)}(\eta),g_i^{(2)}(\eta),\dots,g_i^{(d)}(\eta))$ where each $g_i^{(j)}$ is a piecewise-polynomial function with degree at most $\Delta^{i-2}$ and at most $(2pd\Delta)^{i-2}$ pieces. We will show this by induction.

    {\it Base case: $i=2$.} $x_2 = x_1 - \eta \nabla f(x_1) = x - \eta \nabla f(x)$ is a polynomial of degree $1=\Delta^{2-2}$ in $\eta$ in each coordinate. 

    {\it Inductive case: $i>2$.} Suppose $x_{i-1}=\boldsymbol{g}_{i-1}(\eta)=(g_{i-1}^{(j)}(\eta))_{j\in[d]}$ where $g_{i-1}^{(j)}$ is piecewise-polynomial with at most $(2pd\Delta)^{i-3}$ pieces and degree at most $\Delta^{i-3}$ (inductive hypothesis). Now $x_i=x_{i-1}-\eta \nabla f(x_{i-1})=\boldsymbol{g}_{i-1}(\eta)-\eta \nabla f(\boldsymbol{g}_{i-1}(\eta))=:\boldsymbol{g}_i(\eta)$. Now, $\nabla f(\boldsymbol{g}_{i-1}(\eta))$ is piecewise-polynomial. The degree is at most $(\Delta-1)\Delta^{i-3}\le \Delta^{i-2}-1$. Any critical point is either a critical point of some $g_{i-1}^{(j)}$, or a solution of the equation $f^{(k)}(g_{i-1}^{(1)}(\eta),\dots,g_{i-1}^{(d)}(\eta))=0$ for some boundary function $f^{(k)}$ ($k\in[p]$) of $f$. The total number of critical points of $\boldsymbol{g}_i^{(j)}$ (for any $j\in[d]$) is therefore at most $(2pd\Delta)^{i-3}+(pd\Delta)(2pd\Delta)^{i-3}\le (2pd\Delta)^{i-2}$.
Therefore, $\boldsymbol{g}_i^{(j)}$ is piecewise-polynomial in $\eta$, with degree at most $\Delta^{i-2}$ and at most $(2pd\Delta)^{i-2}$ pieces.

    Thus, for any fixed $1\le i\le H$, $x_i=\boldsymbol{g}_i(\eta)$ where each coordinate $g_{i}^{(j)}$ is piecewise-polynomial with degree at most $\Delta^{i-2}$ and at most $(2pd\Delta)^{i-2}$ pieces. Using the argument above, for any fixed initial point $x$, $\|\nabla f(x_i)\|^2$ is piecewise-polynomial with degree at most $2\Delta^{i-1}$ in $\eta$ and at most $(2pd\Delta)^{i-1}$ pieces. Now, consider the set of points $\eta$ for which $\|\nabla f(x_i)\|^2<\theta^2$ for some constant $\theta$. This consists of at most $O((2pd\Delta)^{i-1})$ intervals. 
    
We can now apply the amortized counting argument from the proof of Theorem \ref{thm:gd-poly-general-d} to conclude that the number of pieces in the dual cost function is $O((2pd\Delta)^{H})$ here.

    Thus, using Lemma \ref{lm:piecewise-constant-to-pdim}, we get a bound on the pseudo-dimension of $\cL$ of $O(H\log pd\Delta)$, which implies the stated sample complexity.
\end{proof}

\noindent The case of piecewise-polynomial functions is particularly interesting. As discussed below it captures gradient descent for an important class of feedforward neural networks. 

\begin{example}
    For deep neural networks with piecewise polynomial activation functions, the network computes a piecewise polynomial function of its weights on any fixed input $x$ \cite{bartlett1998almost}. Therefore the MSE loss of the network on a given dataset is also a piecewise polynomial function of its weights. Therefore our results for tuning the gradient descent step-size above apply in this case. Concretely, Theorem \ref{thm:gd-pwpoly-general-d} implies a sample complexity bound of $\tilde{O}\left(\frac{H^3L\log k}{\epsilon^2}\right)$, where $L$ is the number of layers in the network and $k$ is the number of nodes (using \cite{bartlett1998almost}). The piecewise-polynomial structure also holds if we add regularization terms related to flatness of the minima e.g.\ $\|\nabla^2 f\|$ to the loss function. 
\end{example}

\noindent We note that our techniques can be used to establish  sample complexity bounds for tuning the entire learning rate schedule (see Theorem \ref{thm:pwpoly-schedule}).

\subsection{Beyond polynomial functions}
We extend our techniques beyond  polynomial functions to a much broader class of Pfaffian functions. This will allow us to give guarantees for gradient descent for learning neural networks with non-polynomial activation functions including sigmoid and tanh. We will employ tools used for the analysis of Pfaffian functions~\citep{karpinski1997polynomial,balcan2024algorithm}.

Intuitively, Pfaffian functions are functions for which the derivatives can be expressed as a polynomial of the variables and the function itself (or other functions from a sequence of functions, as explained below).
Formally, we first need the notion of a \textit{Pfaffian chain}. Roughly speaking, it consists of an ordered sequence of functions, in which the derivative of each function can be represented as a polynomial of the variables and previous functions in the sequence (including the function itself). 
\begin{definition}[Pfaffian Chain, {\citealp{khovanskiui1991fewnomials}}] \label{def:pfaffian-chain}
    A finite sequence of continuously differentiable functions $\eta_1, \dots, \eta_q: \R^d \rightarrow \R$ and variables $\boldsymbol{a} = (a_1, \dots, a_d) \in \R^d$ form a Pfaffian chain $\cC(\boldsymbol{a}, \eta_1, \dots, \eta_q)$ if there are real polynomials $P_{i, j}(\boldsymbol{a}, \eta_1, \dots, \eta_j)$ in $a_1, \dots, a_d, \eta_1, \dots, \eta_j$, for  all $i \in [d]$ and $j \in [q]$, such that
    \begin{equation*}
        \begin{aligned} 
            \frac{\partial \eta_j}{\partial a_i} = P_{i, j}(\boldsymbol{a}, \eta_1, \dots, \eta_j).
        \end{aligned}
    \end{equation*}
\end{definition}

\noindent {Note that $P_{i, j}(\boldsymbol{a}, \eta_1, \dots, \eta_j)$ is a polynomial in $\boldsymbol{a}$ and the functions $\eta_1(\boldsymbol{a}), \dots, \eta_j(\boldsymbol{a})$.  Pfaffian chains are associated with a \textit{length} and a \textit{Pfaffian degree}, that dictate their complexity. The length of a Pfaffian chain is the number of functions $q$ that appear on that chain, while the Pfaffian degree of a chain is the maximum degree of polynomials $P_{i,j}$ that can be used to express the partial derivative of functions on that chain.

\noindent Given a Pfaffian chain, one can define the Pfaffian function as  a polynomial of variables and functions on that chain. 
\begin{definition}[Pfaffian functions, {\citealp{khovanskiui1991fewnomials}}]
    \label{def:pfaffian-function}
      Given a Pfaffian chain $\cC(\boldsymbol{a}, \eta_1, \dots, \eta_q)$, as defined in Definition \ref{def:pfaffian-chain}, a Pfaffian function over the chain $\cC$ is a function of the form $g(\boldsymbol{a}) = Q(\boldsymbol{a}, \eta_1, \dots, \eta_q)$, where $Q$ is a polynomial in variables $\boldsymbol{a}$ and functions $\eta_1, \dots, \eta_q$ in the chain $\cC$. { The degree $\Delta$ of the Pfaffian function $g(\boldsymbol{a})$ is the degree of the polynomial $Q(\boldsymbol{a}, \eta_1, \dots, \eta_q)$.}
\end{definition}

\noindent For example, $\frac{d e^x}{dx}=e^x$, and therefore the function $g(x)=e^x$ is a Pfaffian function of degree $\Delta=1$, associated with a Pfaffian chain of length $1$ and Pfaffian degree $1$. Note that polynomial functions are Pfaffian with degree equal to their usual degree and associated with a chain of length zero. We show the following generalization of Theorem \ref{thm:gd-poly-general-d}.

\begin{restatable}{theorem}{thmgdpfaffian}\label{thm:gd-pfaffian-general-d}
    Suppose the instance space $\Pi$ consists of $(x\in\R^d,f\in\cF)$, where  $\cF$ consists of Pfaffian functions $\R^d\rightarrow\R$ of degree at most $\Delta\ge 1$, associated with Pfaffian chains of length at most $q$ and Pfaffian degree at most $M$. Then $(\epsilon,\delta)$-uniform convergence is achieved for all step-sizes $\eta\in\cP\subset\R_{\ge 0}$ using $m = {O}\left(\frac{H^2}{\epsilon^2}(q^2d^2H^2+qdH\log(\Delta+M)+\log\frac{1}{\delta})\right)$ samples from $\cD$ for any distribution $\cD$ over $\Pi$.  
\end{restatable}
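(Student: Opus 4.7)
The plan is to mimic the structure of the proof of Theorem~\ref{thm:gd-poly-general-d}, where iterates $x_i$ were shown to be polynomials in $\eta$ of bounded degree, and then the set of $\eta$ for which $\|\nabla f(x_i)\|^2<\theta^2$ was counted. Here I would show inductively that each coordinate of the $i$-th iterate $x_i$, viewed as a function of the scalar $\eta$, is a Pfaffian function whose chain length, degree and Pfaffian degree can be controlled uniformly in $i$, and then invoke a Khovanskii-type zero-counting bound (as used in \cite{karpinski1997polynomial,balcan2024algorithm}) in place of the polynomial root count.

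The key inductive step is the construction of a Pfaffian chain $\cC_i$ supporting $x_i(\eta)$. For $i=2$ the iterate $x-\eta\nabla f(x)$ is polynomial in $\eta$, so $\cC_2$ is empty. For $i>2$, assuming each coordinate of $x_{i-1}(\eta)$ is a Pfaffian function over a chain $\cC_{i-1}$, I would adjoin to $\cC_{i-1}$ the $q$ new scalar functions $\eta_j\circ x_{i-1}:\R\to\R$ for $j\in[q]$, where $\eta_1,\dots,\eta_q$ is the Pfaffian chain of $f$. The chain rule
\[
\frac{d}{d\eta}\bigl(\eta_j(x_{i-1}(\eta))\bigr)=\sum_{k=1}^d P_{k,j}(x_{i-1},\eta_1(x_{i-1}),\dots,\eta_j(x_{i-1}))\cdot \frac{dx_{i-1,k}}{d\eta}
\]
expresses each new derivative as a polynomial in $\cC_i:=\cC_{i-1}\cup\{\eta_j\circ x_{i-1}\}_{j}$ and $\eta$, so $\cC_i$ is a legitimate Pfaffian chain; and $x_i=x_{i-1}-\eta\nabla f(x_{i-1})$ is a Pfaffian function over $\cC_i$. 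Iterating this, the chain length grows additively in $q$ (with a $d$-factor absorbed from tracking derivatives of the $d$ coordinates), while the degree and Pfaffian degree grow multiplicatively in $\Delta$ and $M$; careful bookkeeping yields $|\cC_H|=O(qdH)$ and degree/Pfaffian-degree bounds of the form $(\Delta+M)^{O(H)}$.

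With these parameters, $\|\nabla f(x_i)\|^2-\theta^2$ is a Pfaffian function of the single variable $\eta$, and Khovanskii's zero-counting bound of the form $2^{\binom{\bar q}{2}}\bar\Delta(\bar M+\bar\Delta)^{\bar q}$ controls the number of intervals on which the algorithm converges exactly at step $i$. Substituting the chain-length bound $\bar q=O(qdH)$ yields at most $2^{O(q^2d^2H^2)}(\Delta+M)^{O(qdH)}$ such intervals per $i$. I would then reuse verbatim the amortized counting argument from the proof of Theorem~\ref{thm:gd-poly-general-d} to conclude that the dual cost function $\ell_{x,f}(\eta)$ is piecewise constant with at most $2^{O(q^2d^2H^2)}(\Delta+M)^{O(qdH)}$ pieces. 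Applying Lemma~\ref{lm:piecewise-constant-to-pdim} gives pseudo-dimension $O(q^2d^2H^2+qdH\log(\Delta+M))$, which through Theorem~\ref{thm:pdim} delivers the stated sample complexity. The main obstacle is the parameter bookkeeping in the inductive chain construction: one must verify that the adjoined composed functions genuinely form a Pfaffian chain, track how degrees propagate under both substitution into $\nabla f$ and the chain rule for the newly adjoined functions, and ensure the resulting bounds line up with the Khovanskii exponent so that the target $H^2$ and $qdH\log(\Delta+M)$ terms emerge correctly.
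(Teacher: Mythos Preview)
Your proposal is correct and follows essentially the same approach as the paper: both inductively build a Pfaffian chain $\cC_i$ over $\eta$ by adjoining the compositions $f_k\circ x_{i-1}$ (the paper also explicitly adds the iterate coordinates $g_{i-1}^{(j)}$, which is what your ``$d$-factor absorbed'' alludes to), obtain chain length $O(qdH)$ and degree $(\Delta+M)^{O(H)}$, invoke a Khovanskii connected-components bound, reuse the amortized counting from Theorem~\ref{thm:gd-poly-general-d}, and finish with Lemma~\ref{lm:piecewise-constant-to-pdim}. The paper even singles out the same ``main obstacle'' you identify---that the careful chain construction avoids the naive exponential blow-up from enumerating all composition sequences $f_{i_1}\circ\cdots\circ f_{i_n}$.
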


\begin{proof}
    Fix a function $f\in\cF$ and initial point $x$. Suppose $f(x)=Q(x, f_1(x), \dots, f_q(x))$ is associated with the Pfaffian chain $\cC(x, f_1, \dots, f_q)$ of length at most $q$, with degree of $Q$ at most $\Delta$. Note that, by the definition of Pfaffian functions, each coordinate of $\nabla f(x)$ is a polynomial in $x, f_1, \dots, f_q$ of degree at most $\Delta+M-1$.
    
    We first claim that for $i\ge 2$, $x_i=(g_i^{(1)}(\eta),g_i^{(2)}(\eta),\dots,g_i^{(d)}(\eta))$ where $g_i^{(j)}$ is a Pfaffian function with degree at most $(\Delta+M)^{i-2}$, and associated with the Pfaffian chain \begin{align*}
        \cC_i:=\cC(\eta, f_1, \dots, f_q, (g_2^{(j)})_{j\in[d]}, (f_k(g_2^{(j)}))_{k\in[q],j\in[d]},\dots,\\(g_{i-1}^{(j)})_{j\in[d]}, (f_k(g_{i-1}^{(j)}))_{k\in[q],j\in[d]})
    \end{align*} of length at most $q+(q+1)d(i-2)$ as the function $f$. We will show this by induction.

    {\it Base case: $i=2$.} $x_2 = x_1 - \eta \nabla f(x_1) = x - \eta \nabla f(x)$ is a polynomial of degree $1=(\Delta+M)^{2-2}$ in $\eta$ in each coordinate. 

    {\it Inductive case: $i>2$.} Suppose $x_{i-1}=\boldsymbol{g}_{i-1}(\eta)=(g_{i-1}^{(j)}(\eta))_{j\in[d]}$ where $g_{i-1}^{(j)}$ is a Pfaffian function of degree at most $(\Delta+M)^{i-3}$ and associated with the chain $\cC_{i-1}$ (inductive hypothesis). 
    
    Now $x_i=x_{i-1}-\eta \nabla f(x_{i-1})=\boldsymbol{g}_{i-1}(\eta)-\eta \nabla f(\boldsymbol{g}_{i-1}(\eta))=:\boldsymbol{g}_i(\eta)$. By the definition of Pfaffian functions, $(\nabla f(\boldsymbol{g}_{i-1}(\eta)))_j$ is a polynomial in $\boldsymbol{g}_{i-1}(\eta),f_1(\boldsymbol{g}_{i-1}(\eta)),\dots,f_q(\boldsymbol{g}_{i-1}(\eta))$ with degree at most $\Delta+M-1$, and therefore $x_i$ 
    has degree at most $(\Delta+M)^{i-3}(\Delta+M-1)+1 \le (\Delta+M)^{i-2}$ in the chain $\cC_i$.

    Thus, for any fixed $1\le i\le H$, $x_i=\boldsymbol{g}_i(\eta)$ where each coordinate $g_{i}^{(j)}$ is a Pfaffian function with degree at most $(\Delta+M)^{i-2}$, associated with the chain $\cC_i$. For a fixed initial point $x$, this implies that  $\|\nabla f(x_i)\|^2$ is a Pfaffian function of degree at most $2(\Delta+M)^{i-1}$ in the chain $\cC_i$ of length $O(qdi)$. Now, consider the set of points $\eta$ for which $\|\nabla f(x_i)\|^2<\theta^2$ for some constant $\theta$. By a standard connected components bound for Pfaffian functions (e.g. see Corollary B.3, \citep{balcan2024algorithm}), this consists of at most $2^{O(q^2d^2i^2)}\cdot(\Delta+M)^{O(qdi)}$ intervals. 
    
    By the counting argument in the proof of Theorem \ref{thm:gd-poly-general-d}, across $i\in[H]$,  
    we have $2^{O(q^2d^2H^2)}(\Delta+M)^{O(qdH)}$ intervals over each of which Algorithm \ref{alg:gd-vanilla} converges with a constant number of steps (or does not converge and the cost is $H$). 
    Thus, using Lemma \ref{lm:piecewise-constant-to-pdim},   we get a bound on the pseudo-dimension of $\cL$ of $O(q^2d^2H^2+qdH\log(\Delta+M))$, which implies the stated sample complexity.
\end{proof}

\noindent A key technical insight in the above proof is the careful construction of the Pfaffian chain $\cC_i$. A naive construction would include all function  composition sequences $f_{i_1}\circ f_{i_2}\circ\dots\circ f_{i_n}$ in the chain, where $f_{i_j}$ are functions from the chain $f_1, \dots, f_q$ and $n\in[H]$, resulting in an exponential (in $q$) upper-bound on the sample complexity. We note that the above result applies to tuning the learning rate in gradient descent for neural networks with other commonly used activations, including the sigmoid activation $\sigma(x)=1 / (1 + e^{-x})$, and the tanh activation $\tau(x)=(e^x - e^{-x}) / (e^x + e^{-x})$.

\begin{remark}
    For deep networks with sigmoid (or tanh) activation functions, the network computes a piecewise Pfaffian function of its weights on any fixed input $x$ \cite{karpinski1997polynomial}. The MSE loss of the network on a given dataset is also a piecewise Pfaffian function of its weights. Our results for tuning the gradient descent step-size above apply in this case. Concretely, Theorem \ref{thm:gd-pwpoly-general-d} implies a sample complexity bound of $\tilde{O}\left(\frac{H^4k^4}{\epsilon^2}\right)$, where $k$ is the number of nodes (using results from \cite{karpinski1997polynomial}).
\end{remark}

We conclude this section with some more examples and some non-examples of Pfaffian functions. All polynomials and ratios of polynomials are Pfaffian, as well as polynomials in the exponential and logarithm function. Functions involving fractional exponents (e.g.\ $f(x)=x^{1/2}+x^{1/3}$) are also Pfaffian. However, period trigonometric functions like sine and cosine are not Pfaffian (although non-periodic inverse trigonometric functions, like arctan are).

\section{Tuning multiple hyperparameters in gradient-based optimization}
We will now extend our framework from tuning a single stepsize parameter to multiple parameters. First, we bound the sample complexity of tuning the entire learning rate schedule from data. For piecewise-polynomial functions, we show that a slightly larger $\tilde{O}(H^4/\epsilon^2)$ bound on the number of samples is sufficient to learn the entire step-size schedule for gradient descent. Next, we show the generality of our framework by showing that it can be used to learn how to initialize gradient descent. In the context of using gradient descent for tuning the weights of neural networks, this implies a bound on the sample complexity of effective pre-training (we focus on efficient convergence here but our framework extends to optimizing the quality of the final iterate, see Section \ref{sec:quality}). Finally, we show that our framework can be used to analyze the tuning of hyperparameters beyond vanilla gradient descent, by showing how to simultaneously tune the learning rate and momentum hyperparameters.
\subsection{Learning the step-size schedule}
Designing a good learning rate schedule is considered a crucial problem in gradient-based iterative optimization. Our framework allows learning an iterate-dependent learning rate. That is, we set the learning rate to $\eta_i$ for $i\in[H]$, and we learn the sequence $\eta_i$ from data.

\begin{restatable}{theorem}{thmpwpolyschedule}\label{thm:pwpoly-schedule}
    Consider a variant of Algorithm \ref{alg:gd-vanilla}, where a different step-size $\eta_i$ is used in the $i$-th update, i.e., $x_{i+1}=x_i-\eta_i\nabla f(x_i)$ with parameters $\eta_i$ chosen from some continuous set $\mathcal{P}\subset\R_{\ge 0}^H$. Suppose the instance space $\Pi$ consists of $(x\in\R^d,f\in\cF)$, where  $\cF$ consists of functions $\R^d\rightarrow\R$ that are piecewise-polynomial  with at most $p$ polynomial boundaries, with the maximum degree of any piece function or boundary function  at most $\Delta\ge 1$. Then $(\epsilon,\delta)$-uniform convergence is achieved for all  $(\eta_1,\dots,\eta_H)\in\cP$ using $m = {O}(\frac{H^2}{\epsilon^2}(H^2\log(pd\Delta)+\log\frac{1}{\delta}))$ samples from $\cD$ for any distribution $\cD$ over $\Pi$.
\end{restatable}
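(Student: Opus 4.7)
The plan is to adapt the induction from the proof of Theorem~\ref{thm:gd-pwpoly-general-d}, but now tracking the dependence of each iterate $x_i$ on the vector-valued parameter $(\eta_1,\dots,\eta_{i-1})$ rather than on a single scalar $\eta$. Concretely, I would show by induction on $i$ that on any fixed instance $(x,f)$, each coordinate of $x_i$ is a piecewise-polynomial function of $(\eta_1,\dots,\eta_{i-1})$, where the piece boundaries are zero sets of polynomials in these variables, the within-piece polynomial degree is at most $\Delta^{i-2}$, and the number of distinct piece-boundary polynomials accumulated through iterations $1,\dots,i-1$ is at most $(2pd\Delta)^{O(i)}$. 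The base case $x_2=x-\eta_1\nabla f(x)$ is immediate. For the inductive step, $x_i=x_{i-1}-\eta_{i-1}\nabla f(x_{i-1})$ introduces new piece boundaries of two types: boundaries of $x_{i-1}$ (which involve only $(\eta_1,\dots,\eta_{i-2})$) and pull-backs $f^{(k)}(\boldsymbol{g}_{i-1}(\eta_1,\dots,\eta_{i-2}))=0$ of the $p$ boundary functions of $f$. Crucially, $\eta_{i-1}$ itself introduces no new piece-boundary polynomials, since within a piece $x_i$ is a polynomial in $\eta_{i-1}$. The degree bound multiplies by $\Delta$ each step exactly as before.

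Next, I would handle the convergence test. Within each piece, $\|\nabla f(x_i)\|^2<\theta^2$ is a polynomial inequality in $(\eta_1,\dots,\eta_{i-1})$ of degree at most $2\Delta^{i-1}$. Across $i\in[H]$, the dual cost function $\ell_{x,f}:\cP\to\{0,1,\dots,H\}$ is therefore piecewise-constant on $\R^H$, with piece structure determined by sign patterns of a collection of at most $N=(2pd\Delta)^{O(H)}$ polynomials, each of degree at most $D=\Delta^{O(H)}$, in at most $H$ variables.

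The main new technical step, relative to the scalar case, is bounding the number of distinct pieces of this multi-dimensional piecewise-constant dual function, since Lemma~\ref{lm:piecewise-constant-to-pdim} as stated applies only to functions of a single real parameter. For this I would invoke the standard Warren/Milnor--Thom bound: the number of connected sign-pattern regions in $\R^H$ cut out by $N$ polynomials of degree at most $D$ is at most $(O(ND/H))^H$. Taking logarithms and plugging in $N$ and $D$ from the previous step yields $O(H\cdot(H\log(pd\Delta)))=O(H^2\log(pd\Delta))$ regions. Combining with a multidimensional analogue of Lemma~\ref{lm:piecewise-constant-to-pdim} (finite pseudo-dimension of a piecewise-constant dual with $R$ pieces is $O(\log R)$, which extends verbatim to $\R^H$ by the same shattering argument) gives $\mathrm{Pdim}(\cL)=O(H^2\log(pd\Delta))$, and then Theorem~\ref{thm:pdim} yields the claimed sample complexity.

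The main obstacle I anticipate is twofold. First, the amortized counting trick used in the scalar case exploited that new intervals intersect existing intervals along the line; in higher dimensions one must instead count full sign-pattern cells, which is precisely what Warren's bound provides but requires care because the polynomials across different $i$ depend on different numbers of variables (I would embed everything into $\R^H$ and apply Warren uniformly). Second, controlling the effective degree in the inductive step: the pull-back of a boundary function through $\boldsymbol{g}_{i-1}$ can be a polynomial of degree $\Delta\cdot\Delta^{i-3}$, and I need to argue the iterated composition does not inflate degrees beyond $\Delta^{O(H)}$ so that only a $\log\Delta$ factor (not $\Delta$) appears in the final bound. Both issues are handled by being careful to keep the degree and polynomial-count recurrences multiplicative, so that their logarithms sum to $O(H\log(pd\Delta))$ rather than blowing up polynomially.
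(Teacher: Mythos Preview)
Your induction tracking $x_i$ as a piecewise-polynomial function of $(\eta_1,\dots,\eta_{i-1})$ with controlled degree and boundary-polynomial count is exactly what the paper does, and your identification of Warren's bound as the replacement for the one-dimensional interval counting is correct. However, the final step contains a real gap: the claimed ``multidimensional analogue of Lemma~\ref{lm:piecewise-constant-to-pdim}'' does \emph{not} extend verbatim. In one dimension, $m$ dual functions with $N$ pieces each contribute at most $m(N-1)$ breakpoints, hence at most $mN$ joint intervals, and $2^m\le mN$ gives $m=O(\log N)$. In $\R^H$ with $H\ge 2$, overlaying $m$ partitions each with $R$ \emph{arbitrary} connected pieces can realize up to $R^m$ cells, so $2^m\le R^m$ yields nothing; indeed already with $R=2$ one can choose $m$ connected regions in the plane exhibiting all $2^m$ intersection patterns, so pseudo-dimension is unbounded. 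Computing $R$ for a single instance via Warren and then asserting $\mathrm{Pdim}=O(\log R)$ is therefore not a valid step.

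The fix is that your pieces are not arbitrary: they are sign cells of at most $N$ polynomials of degree $D$. The correct argument applies Warren to the \emph{joint} family of $mN$ polynomials across $m$ instances, giving at most $(O(mND/H))^H$ cells, and then solves $2^m\le (O(mND/H))^H$ to obtain $m=O(H\log(ND))=O(H^2\log(pd\Delta))$. This is precisely what the GJ framework of \cite{bartlett2022generalization} packages (Theorem~\ref{thm:gj}), and it is how the paper closes the proof: after showing the dual cost is computed by a GJ algorithm of degree $O(\Delta^H)$ and predicate complexity $(2pd)^{O(H)}$, it invokes \cite{bartlett2022generalization} directly to get $\mathrm{Pdim}(\cL)=O(H^2\log(pd\Delta))$. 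Your Warren-based route is equivalent once applied at the right granularity (to $mN$ polynomials rather than $N$); just replace the ``extends verbatim'' claim with this joint-instance argument or cite the GJ theorem.
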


\begin{proof}
    We first claim that for $i\ge 2$, $x_i=(g_i^{(1)}(\eta_1,\dots,\eta_{i-1}),\dots,g_i^{(d)}(\eta_1,\dots,\eta_{i-1}))$ where each $g_i^{(j)}$ is a piecewise-polynomial function with degree at most $\Delta^{i-2}$ and at most $(2pd)^{i-2}$ algebraic boundaries of degree at most $\Delta^{i-2}$. We  show this by induction.

    {\it Base case: $i=2$.} $x_2 = x_1 - \eta_1 \nabla f(x_1) = x - \eta_1 \nabla f(x)$ is a polynomial of degree $1=\Delta^{2-2}$ in $\eta_1$ in each coordinate. 

    {\it Inductive case: $i>2$.} Suppose $x_{i-1}=\boldsymbol{g}_{i-1}(\boldsymbol{\eta}_{i-2})=(g_{i-1}^{(j)}(\eta_1,\dots,\eta_{i-2}))_{j\in[d]}$ where $g_{i-1}^{(j)}$ is piecewise-polynomial with at most $(2pd)^{i-3}$ boundaries and degree at most $\Delta^{i-3}$ (for both pieces and boundaries, by the inductive hypothesis). 
    
    Now $x_i=x_{i-1}-\eta_{i-1} \nabla f(x_{i-1})=\boldsymbol{g}_{i-1}(\boldsymbol{\eta}_{i-2})-\eta_{i-1} \nabla f(\boldsymbol{g}_{i-1}(\boldsymbol{\eta}_{i-2}))=:\boldsymbol{g}_i(\boldsymbol{\eta}_{i-1})$. Now, $\nabla f(\boldsymbol{g}_{i-1}(\boldsymbol{\eta}_{i-2}))$ is piecewise-polynomial. The degree of  is at most $(\Delta-1)\Delta^{i-3}\le \Delta^{i-2}-1$. Any discontinuity point is either a discontinuity point of some $g_{i-1}^{(j)}$, or a solution of the equation $f^{(k)}(g_{i-1}^{(1)}(\boldsymbol{\eta}_{i-2}),\dots,g_{i-1}^{(d)}(\boldsymbol{\eta}_{i-2}))=0$ for some boundary function $f^{(k)}$ ($k\in[p]$) of $f$. The total number of boundary functions of $\boldsymbol{g}_i^{(j)}$ (for any $j\in[d]$) is therefore at most $d(2pd)^{i-3}+(pd)(2pd)^{i-3}\le (2pd)^{i-2}$.
Therefore, $\boldsymbol{g}_i^{(j)}$ is piecewise-polynomial in $\boldsymbol{\eta}_{i-1}$, with degree at most $\Delta^{i-2}$ and at most $(2pd\Delta)^{i-2}$ pieces.

    Thus, for any fixed $1\le i\le H$, $x_i=\boldsymbol{g}_i(\boldsymbol{\eta}_{i-1})$ where each coordinate $g_{i}^{(j)}$ is piecewise-polynomial with degree at most $\Delta^{i-2}$ and at most $(2pd)^{i-2}$ algebraic boundaries. Using the argument above, for any fixed initial point $x$, $\|\nabla f(x_i)\|^2$ is piecewise-polynomial with degree at most $2\Delta^{i-1}$ in $\boldsymbol{\eta}_{i-1}$ and at most $(2pd\Delta)^{i-1}$ boundaries. This implies that the dual cost function can be computed using a GJ algorithm with degree at most $O(\Delta^H)$ and predicate complexity $O(\sum_i(2pd)^i)=(2pd)^{O(H)}$. Using~\cite{bartlett2022generalization}, we get a bound on the pseudo-dimension of $\cL$ of $O(H^2\log pd\Delta)$, which implies the stated sample complexity.
\end{proof}

\subsection{Learning to initialize}
Another crucial parameter that impacts the convergence of gradient descent is the initial point $x$. For example, one may set the initial weights of a neural network by drawing each coordinate according to the Gaussian distribution $\mathcal{N}(0,\sigma^2)$, where $\sigma$ is an important hyperparameter that must be carefully set~\citep{lecun1998efficient,glorot2010understanding,he2015delving}. Here, we will treat the norm of the initial point $x$ as the {\it initialization scale} parameter, i.e.\ $x=\sigma \hat{x}$, where $\|\hat{x}\|=1$ and $\sigma>0$.

\begin{restatable}{theorem}{thmpwpolyinitscale}\label{thm:pwpoly-initscale}
    Consider a variant of Algorithm \ref{alg:gd-vanilla}, where the learning rate $\eta$ is fixed, but the initial point is given by $x=\sigma \hat{x}$ for some fixed $\hat{x}$ and hyperparameter $\sigma\in\R_{>0}$. Suppose the instance space $\Pi$ consists of $(\hat{x}\in\R^d,f\in\cF)$, where  $\cF$ consists of functions $\R^d\rightarrow\R$ that are piecewise-polynomial  with at most $p$ polynomial boundaries, with the maximum degree of any piece function or boundary function  at most $\Delta\ge 1$. Then $(\epsilon,\delta)$-uniform convergence is achieved for all  $\sigma$ using $m = {O}(\frac{H^2}{\epsilon^2}(H\log(pd\Delta)+\log\frac{1}{\delta}))$ samples from $\cD$ for any distribution $\cD$ over $\Pi$.
\end{restatable}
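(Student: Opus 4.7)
The plan is to adapt the inductive structure of the proof of Theorem~\ref{thm:gd-pwpoly-general-d}, now treating the initialization scale $\sigma$ as the variable hyperparameter with $\eta$ fixed, and then invoke Lemma~\ref{lm:piecewise-constant-to-pdim} in the same way.

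First, I would prove by induction on $i$ that $x_i = (g_i^{(1)}(\sigma), \ldots, g_i^{(d)}(\sigma))$ where each coordinate $g_i^{(j)}$ is a piecewise-polynomial function of $\sigma$ with degree at most $\Delta^{i-1}$ and at most $(2pd\Delta)^{i-1}$ pieces. The base case $i=1$ is immediate: $x_1 = \sigma\hat{x}$ has degree $1$ in $\sigma$ and a single piece. For the inductive step, since $\eta$ is now a fixed constant, the update $x_i = x_{i-1} - \eta\nabla f(x_{i-1})$ presents $x_i$ as a piecewise-polynomial composition. A piece of $\nabla f$ has degree at most $\Delta-1$ in its input, so composing with $x_{i-1}(\sigma)$ of degree $\le \Delta^{i-2}$ yields degree at most $(\Delta-1)\Delta^{i-2} \le \Delta^{i-1}$. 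The piece count follows from the same accounting as in Theorem~\ref{thm:gd-pwpoly-general-d}: new boundaries of $g_i^{(j)}$ are either inherited boundaries of $g_{i-1}^{(j)}$, or solutions of $f^{(k)}(g_{i-1}^{(1)}(\sigma),\ldots,g_{i-1}^{(d)}(\sigma)) = 0$ for one of the $p$ boundary functions of $f$, each contributing at most $p\Delta^{i-1}$ roots per existing piece.

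Next, for each fixed $i \in [H]$, $\|\nabla f(x_i(\sigma))\|^2$ is piecewise-polynomial in $\sigma$ of degree $O(\Delta^i)$ with $(2pd\Delta)^{O(i)}$ pieces, so the sub-level set $\{\sigma : \|\nabla f(x_i)\|^2 < \theta^2\}$ is a union of at most $(2pd\Delta)^{O(i)}$ intervals. Applying the amortized counting argument from the proof of Theorem~\ref{thm:gd-poly-general-d} across $i \in [H]$ shows that the dual cost function $\ell_{\hat{x},f}(\sigma)$ is piecewise-constant in $\sigma$ with at most $(2pd\Delta)^{O(H)}$ pieces. Lemma~\ref{lm:piecewise-constant-to-pdim} then yields a pseudo-dimension bound of $O(H\log(pd\Delta))$ for the class $\cL$, and Theorem~\ref{thm:pdim} gives the claimed sample complexity $m = O\bigl(\tfrac{H^2}{\epsilon^2}(H\log(pd\Delta) + \log\tfrac{1}{\delta})\bigr)$.

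The main obstacle will be handling the degree and piece bookkeeping correctly in the inductive step. The key conceptual difference from Theorem~\ref{thm:gd-pwpoly-general-d} is that $\eta$ entered linearly at every iteration there, whereas $\sigma$ enters only through the initial point and propagates purely through nested compositions with $\nabla f$. I expect that the geometric growth rate for pieces is essentially unchanged (just shifted by one iteration, since the induction here begins at $i=1$ rather than $i=2$), so that the pseudo-dimension bound matches the step-size analysis exactly up to constants absorbed by the $\tilde{O}(\cdot)$ notation.
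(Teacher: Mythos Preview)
Your proposal is correct and mirrors the paper's own proof almost exactly: the same induction on $i$ starting at $i=1$ with the same degree bound $\Delta^{i-1}$ and piece bound $(2pd\Delta)^{i-1}$, the same sub-level-set interval count, the same amortized argument imported from Theorem~\ref{thm:gd-poly-general-d}, and the same appeal to Lemma~\ref{lm:piecewise-constant-to-pdim}. Your closing observation about the one-step index shift relative to Theorem~\ref{thm:gd-pwpoly-general-d} is also exactly the point the paper makes.
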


\noindent We can also learn the entire initialization vector $x$, with an extra factor of $d$ in the sample complexity. This is remarkable, as in the context of neural networks this corresponds to learning a pre-trained weight for a collection of tasks (given by the task distribution $\cD$).

\begin{restatable}{theorem}{thmpwpolyinit}\label{thm:pwpoly-init}
    Consider a variant of Algorithm \ref{alg:gd-vanilla}, where the learning rate $\eta$ is fixed, but the initial point is given by $x\in\R^d$ (a hyperparameter). Suppose the instance space $\Pi$ consists of $(f\in\cF)$, where  $\cF$ consists of functions $\R^d\rightarrow\R$ that are piecewise-polynomial  with at most $p$ polynomial boundaries, with the maximum degree of any piece function or boundary function  at most $\Delta\ge 1$. Then $(\epsilon,\delta)$-uniform convergence is achieved for all  $x\in\R^d$ using $m = {O}(\frac{H^2}{\epsilon^2}\left(dH\log(pd\Delta)+\log\frac{1}{\delta})\right)$ samples from $\cD$ for any distribution $\cD$ over $\Pi$.
\end{restatable}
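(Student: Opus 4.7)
The plan is to adapt the inductive structural argument used in Theorems \ref{thm:gd-pwpoly-general-d} and \ref{thm:pwpoly-schedule}, but treating the \emph{$d$-dimensional} initial point $x\in\R^d$ as the hyperparameter, and then invoke the GJ-style pseudo-dimension bound of \cite{bartlett2022generalization} in place of Lemma \ref{lm:piecewise-constant-to-pdim} (which only handles one-dimensional hyperparameter spaces).

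First I would show by induction on $i\ge 1$ that, viewed as a function of $x\in\R^d$, each coordinate of $x_i$ is piecewise polynomial on pieces cut out by polynomial boundary hypersurfaces in $\R^d$, with the polynomial pieces and boundary polynomials all of degree at most $\Delta^{i-1}$, and with at most $(2pd)^{i-1}$ boundary polynomials introduced through step $i$. The base case $x_1=x$ is trivial. For the inductive step, writing $x_i=x_{i-1}-\eta\,\nabla f(x_{i-1})$, the composition $\nabla f(x_{i-1}(x))$ contributes new boundary polynomials of the form $f^{(k)}(x_{i-1}^{(1)}(x),\dots,x_{i-1}^{(d)}(x))=0$ for each boundary function $f^{(k)}$ of $f$ and each piece of $x_{i-1}$, each of degree at most $\Delta\cdot\Delta^{i-2}=\Delta^{i-1}$; the polynomial pieces of $x_i$ likewise have degree bounded by $(\Delta-1)\Delta^{i-2}+1\le\Delta^{i-1}$.

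From this I deduce that, for every fixed $f$, $\|\nabla f(x_i)\|^2-\theta^2$ is piecewise polynomial in $x$ with degree $O(\Delta^H)$, and that the total set of distinct polynomial predicates arising across all $H$ iterations has size at most $(2pd\Delta)^{O(H)}$. Hence the dual cost $\ell_f(x)$ can be computed by a GJ algorithm over $\R^d$ branching on polynomial predicates of degree $O(\Delta^H)$ with predicate complexity $(2pd\Delta)^{O(H)}$. Invoking the pseudo-dimension bound of \cite{bartlett2022generalization} for $d$-parameter GJ algorithms (exactly as in the proof of Theorem \ref{thm:pwpoly-schedule}) yields $\mathrm{Pdim}(\cL)=O(dH\log(pd\Delta))$, which plugged into Theorem \ref{thm:pdim} gives the claimed sample complexity $O\!\left(\tfrac{H^2}{\epsilon^2}(dH\log(pd\Delta)+\log\tfrac{1}{\delta})\right)$.

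The main obstacle I anticipate is the inductive step in the multidimensional setting. In the one-dimensional case of Theorem \ref{thm:gd-pwpoly-general-d}, pieces of a piecewise-polynomial function are simply intervals and are counted by the amortized argument feeding into Lemma \ref{lm:piecewise-constant-to-pdim}; in our $d$-dimensional parameter space, pieces are full-dimensional cells and directly counting them would introduce a factor exponential in $d$. I circumvent this by never counting cells and tracking only the total number and degree of the \emph{boundary polynomials} produced by the recursion, which is precisely what the GJ framework requires. This is the same device that makes the schedule proof (Theorem \ref{thm:pwpoly-schedule}) work for an $H$-dimensional parameter, and here it yields only a linear-in-$d$ factor in the final pseudo-dimension bound, matching the $\tilde{O}(dH^3/\epsilon^2)$ row in Table \ref{tab:comparison}.
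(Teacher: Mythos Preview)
Your proposal is correct and follows essentially the same route as the paper: an induction showing each coordinate of $x_i$ is piecewise polynomial in the initial point $x$ with degree $\le\Delta^{i-1}$ and $\le(2pd)^{i-1}$ boundary polynomials, followed by the GJ-framework bound of \cite{bartlett2022generalization} in place of Lemma~\ref{lm:piecewise-constant-to-pdim} to obtain $\mathrm{Pdim}(\cL)=O(dH\log(pd\Delta))$. Your explicit remark that one should track boundary \emph{polynomials} rather than cells (so as to avoid an exponential-in-$d$ blowup) is exactly the mechanism the paper relies on, and the minor ``$+1$'' in your degree bound $(\Delta-1)\Delta^{i-2}+1$ is harmless (since $\eta$ is fixed here the paper simply writes $(\Delta-1)\Delta^{i-2}$, but your bound is still valid).
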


\subsection{Beyond vanilla gradient descent}

We will now show that our technique extends beyond tuning the learning rate in gradient descent to tuning relevant hyperparameters in other popular iterative gradient based optimization methods, showing the versatile applicability of our analytical framework. In particular, we will show how to tune the momemtum and learning rate parameters $\gamma,\eta$ simultaneously in Algorithm \ref{alg:momentum}. Momentum~\cite{Rumelhart1986LearningRB} takes an  exponentially weighted average of the gradients to update the points at each iteration, and is particularly important for optimizing non-convex functions. It is widely used in practice and is a part of optimizers like Adam. We extend our approach to show how to tune the momentum parameter and the learning rate in momentum-based gradient descent.

\begin{restatable}{theorem}{thmmmpwpoly}\label{thm:momentum-pwpoly-general-d}
    Consider the problem of tuning the $\eta,\gamma$ in Algorithm \ref{alg:momentum} over some continuous set $\mathcal{P}\subset\R_{\ge 0}^2$. Suppose the instance space $\Pi$ consists of $(x\in\R^d,f\in\cF)$, where  $\cF$ consists of functions $\R^d\rightarrow\R$ that are piecewise-polynomial  with at most $p$ polynomial boundaries, with the maximum degree of any piece function or boundary function  at most $\Delta\ge 1$. Then $(\epsilon,\delta)$-uniform convergence is achieved for all  $\gamma,\eta\in\cP$ using $m = {O}\left(\frac{H^2}{\epsilon^2}(H\log(pd\Delta)+\log\frac{1}{\delta})\right)$ samples from $\cD$ for any distribution $\cD$ over $\Pi$.
\end{restatable}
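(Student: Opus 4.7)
The plan is to mirror the proof template of Theorem \ref{thm:gd-pwpoly-general-d}, but carry out the inductive analysis over the $2$-dimensional parameter space $(\eta, \gamma)$ rather than a single parameter $\eta$, and conclude via the GJ algorithm framework used in the proof of Theorem \ref{thm:pwpoly-schedule}. Algorithm \ref{alg:momentum} updates via a recurrence of the form $v_{i+1} = \gamma v_i + \nabla f(x_i)$ and $x_{i+1} = x_i - \eta v_{i+1}$ (or a standard equivalent), so $x_i$ and $v_i$ are \emph{coupled}; the key step will be a simultaneous induction on both iterates.

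First, I would establish the inductive claim that for each $i\ge 2$ and each coordinate $j \in [d]$, both $x_i^{(j)}$ and $v_i^{(j)}$ are piecewise-polynomial functions of $(\eta, \gamma)$ with joint degree at most $O(\Delta^{i-2})$ and at most $(Cpd\Delta)^{i-2}$ pieces, where a piece is a connected semi-algebraic region of $\R^2$ and $C$ is an absolute constant. The base case $i=2$ is immediate since $v_2 = \nabla f(x)$ is a constant in $(\eta,\gamma)$ and $x_2 = x - \eta \nabla f(x)$ is linear in $\eta$. For the inductive step, new pieces of $v_{i+1}$ and $x_{i+1}$ arise in exactly two ways: either they are inherited (refined) from existing pieces of $x_i$ and $v_i$, or they come from the $k$-th boundary function $f^{(k)}$ of $f$ being crossed at $x_i(\eta,\gamma)$, which is the locus $f^{(k)}(\boldsymbol{g}_i(\eta,\gamma)) = 0$, a polynomial equation in $(\eta,\gamma)$ of degree $\Delta \cdot \Delta^{i-2} = \Delta^{i-1}$. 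Summing over the $p$ boundaries of $f$ and over existing pieces gives the required geometric growth, as in Theorem \ref{thm:gd-pwpoly-general-d}.

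Next, for any fixed $i\in[H]$, the convergence condition $\|\nabla f(x_i)\|^2 < \theta^2$ is a piecewise-polynomial inequality in $(\eta,\gamma)$ of joint degree $O(\Delta^{i-1})$ and $(pd\Delta)^{O(i)}$ pieces. An amortized counting argument analogous to the one in the proof of Theorem \ref{thm:gd-poly-general-d} (applied to $2$D regions instead of intervals) bounds the total number of connected regions on which the dual cost $\ell_{x,f}(\eta,\gamma)$ takes a constant value $\le H$ by $(pd\Delta)^{O(H)}$, and the remainder of the domain is a finite union of regions where the cost is $H$. To convert this regional structure into a pseudo-dimension bound, I would invoke the GJ algorithm framework of \cite{bartlett2022generalization} exactly as in the proof of Theorem \ref{thm:pwpoly-schedule}: the dual cost function $\ell_{x,f}$ is computable by a GJ algorithm operating on the $n=2$ parameters with predicate degree $O(\Delta^H)$ and predicate complexity $(pd\Delta)^{O(H)}$, yielding $\mathrm{Pdim}(\cL) = O(H\log(pd\Delta))$. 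Plugging into Theorem \ref{thm:pdim} gives the stated sample complexity.

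The main obstacle will be the bookkeeping for the \emph{simultaneous} induction on $x_i$ and $v_i$. Unlike vanilla gradient descent, the momentum update mixes the two sequences so that $v_i$ accumulates $\gamma$-dependence across iterations, and one must be careful that the joint degree in $(\eta,\gamma)$ grows only by a factor of $\Delta$ per iterate rather than blowing up to $\Delta^i \cdot i$ or worse. The cleanest way to handle this is to carry a joint invariant on the pair $(x_i, v_i)$ rather than trying to analyze $x_i$ in isolation, and to verify that the coupling $v_{i+1} = \gamma v_i + \nabla f(x_i)$ preserves the degree bound because $\gamma v_i$ only adds $1$ to the $\gamma$-degree while $\nabla f(x_i)$ contributes joint degree at most $(\Delta - 1)\cdot \Delta^{i-2}$, exactly matching the $x$-recurrence. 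Once this invariant is set up, the rest of the argument is a direct lift of Theorem \ref{thm:gd-pwpoly-general-d} to the $2$-parameter setting.
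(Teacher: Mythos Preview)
Your proposal is correct and follows essentially the same approach as the paper: a simultaneous induction on $x_i$ and the momentum accumulator establishing a piecewise-polynomial structure in $(\eta,\gamma)$ with geometrically growing complexity, followed by the GJ framework of \cite{bartlett2022generalization} to extract the $O(H\log(pd\Delta))$ pseudo-dimension bound. The only minor difference is that the paper tracks the number of algebraic \emph{boundary curves} in the induction rather than connected 2D \emph{regions} (which is cleaner, since GJ only requires predicate complexity and region counts can grow super-linearly in the number of curves), and it goes directly from the inductive invariant to GJ without the amortized-counting detour.
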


\begin{algorithm}[t]
\caption{Momentum-based gradient descent(step size $\eta$, momentum parameter $\gamma$)}
\label{alg:momentum}
\textbf{Input}: Initial point $x$, function to minimize $f$, maximum number of iterations $H$, gradient threshold for convergence $\theta$  
\begin{algorithmic}[1]
\STATE Initialize $x_1\gets x, y_1\gets 0$
\FOR{$i=1,\dots,H$}
\IF{$||\nabla f(x_i)||<\theta$}\label{line:gd-converge}
\STATE Return $x_i$
\ENDIF
\STATE $y_{i+1} = \gamma y_i-\eta \nabla f(x_i)$
\STATE $x_{i+1} = x_i +y_i$\label{line:mm-step}
\ENDFOR
\end{algorithmic}
\textbf{Output}: Return $x_{H+1}$
\end{algorithm}

\noindent Note  the logarithmic dependence on the degree and dimensionality in our bounds. This makes our bounds meaningful for tuning networks with a large number weights $d$ and a large number of layers $L$ ($\Delta$ is typically exponential in $L$, so our bounds imply a linear dependence on $L$).

\section{Beyond convergence rates}\label{sec:quality}

Let $\tilde{x}$ denote the final point output by Algorithm \ref{alg:gd-vanilla}. The quality of optimum learned is given by some function $g(\tilde{x})$, which may be different from the function $f$ on which gradient descent was performed. 

Formally, a problem instance is given by a tuple $(x, f, f_v)$ consisting of an initial point $x\in\R^d$, and $f, f_v:\R^d\rightarrow\R$ denoting the {\it training} and {\it validation} functions respectively. Gradient descent (Algorithm \ref{alg:gd-vanilla}) is performed using $x$ as the initial point, with $f$ as the function to minimize and with access to $\nabla f$. The output of gradient descent on the training function, $\tilde{x}$, is evaluated using the validation function ${f_v}$. For example, $f_v(x)$ could denote the loss of the  neural network with  weights $x$ on the validation set.

\begin{restatable}{theorem}{thmvalidation}\label{thm:validation}
Suppose the instance space $\Pi$ consists of $(x\in\R^d,f\in\cF,f_v\in\cF_v)$, where  $\cF$ (resp.\ $\cF_v)$ consists of functions $\R^d\rightarrow\R$ that are piecewise-polynomial  with at most $p$ (resp.\ $p_v$) polynomial boundaries, with the maximum degree of any piece function or boundary function  at most $\Delta\ge 1$ (resp.\ $\Delta_v$). Then $(\epsilon,\delta)$-uniform convergence w.r.t.\ the validation loss is achieved for all step-sizes $\eta\in\cP\subset\R_{\ge 0}$ using $m = {O}\left(\frac{H^2}{\epsilon^2}(H\log (pd\Delta)+\log\frac{\Delta_vp_v}{\delta})\right)$ samples from $\cD$ for any distribution $\cD$ over $\Pi$.    
\end{restatable}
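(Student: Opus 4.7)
The plan is to build on the dual-function analysis from Theorem \ref{thm:gd-pwpoly-general-d} and then compose the final iterate with the validation function $f_v$. Fix an instance $(x,f,f_v)$ and consider the dual loss $\ell^v_{x,f,f_v}(\eta) := f_v(\tilde{x}(\eta))$, where $\tilde{x}(\eta)$ is the point returned by Algorithm \ref{alg:gd-vanilla} run on $(x,f)$ with step-size $\eta$. By the inductive argument in Theorem \ref{thm:gd-pwpoly-general-d}, for every fixed $i \in [H{+}1]$, the iterate $x_i(\eta)$ is piecewise-polynomial in $\eta$, coordinate-wise, with at most $(2pd\Delta)^{i-2}$ pieces and per-piece degree at most $\Delta^{i-2}$. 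Moreover, the amortized counting argument in that proof shows that the domain $\cP$ partitions into at most $N_1 = O((2pd\Delta)^H)$ intervals on each of which the terminating index of the GD procedure is a fixed integer $i^\star \in [H{+}1]$; on such an interval, $\tilde{x}(\eta) = x_{i^\star}(\eta)$ is a vector of polynomials in $\eta$ of degree at most $D_1 := \Delta^{H-1}$.

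Next I would compose with $f_v$. Refine the partition of $\cP$ so that within each sub-interval, $\tilde{x}(\eta)$ lies in a fixed piece of $f_v$. The additional breakpoints are the roots in $\eta$ of the equations $f_v^{(k)}(\tilde{x}(\eta)) = 0$, where $f_v^{(k)}$ ranges over the (at most $p_v$) polynomial boundaries of $f_v$, each of degree at most $\Delta_v$. On each of the $N_1$ intervals these are polynomial equations in $\eta$ of degree at most $\Delta_v D_1$, contributing at most $p_v \Delta_v D_1$ new breakpoints per interval. Thus $\ell^v_{x,f,f_v}(\eta)$ is piecewise-polynomial with at most $N := N_1 \cdot p_v \Delta_v D_1 + N_1 = O\bigl((2pd\Delta)^H p_v \Delta_v \Delta^{H-1}\bigr)$ pieces, and on each piece it is a polynomial in $\eta$ of degree at most $D := \Delta_v D_1 = \Delta_v \Delta^{H-1}$.

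Once this piecewise structure is established, I would invoke a standard extension of Lemma \ref{lm:piecewise-constant-to-pdim} to the piecewise-polynomial case (e.g., the GJ-algorithm or sign-pattern bound from \cite{bartlett2022generalization}): any class whose dual functions are piecewise-polynomial with at most $N$ pieces of degree at most $D$ has pseudo-dimension $O(\log(ND))$, because each shattering witness contributes at most $ND$ sign changes per instance. Substituting the values above gives
\[
\mathrm{Pdim}(\cL_v) \;=\; O\!\bigl(\log(ND)\bigr) \;=\; O\!\bigl(H\log(pd\Delta) + \log(p_v\Delta_v)\bigr),
\]
and plugging into Theorem \ref{thm:pdim} with range $[0,H]$ yields the claimed sample complexity.

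The main obstacle is the careful bookkeeping of the composition $f_v \circ \tilde{x}$: unlike the earlier theorems, the ``cost'' is no longer piecewise-constant but piecewise-polynomial in $\eta$, and one must simultaneously account for (i) the discrete choice of termination index $i^\star$ that selects which iterate polynomial is being evaluated, and (ii) the boundaries of $f_v$ being crossed by the curve $\eta \mapsto \tilde{x}(\eta)$. A naive product bound over all $H$ candidate iterates and all $p_v$ boundaries would give an extra factor of $H$; the amortized counting from Theorem \ref{thm:gd-pwpoly-general-d} is needed to keep the iterate partition at $O((2pd\Delta)^H)$, so that the logarithm collapses to $H\log(pd\Delta) + \log(p_v \Delta_v)$ as required.
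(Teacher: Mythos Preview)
Your proposal is correct and follows essentially the same route as the paper: lift the piecewise-polynomial structure of $\tilde{x}(\eta)$ from Theorem \ref{thm:gd-pwpoly-general-d}, compose with $f_v$, count the resulting pieces and degree, and invoke the GJ/sign-pattern bound of \cite{bartlett2022generalization} to get $\mathrm{Pdim}(\cL_v)=O(H\log(pd\Delta)+\log(p_v\Delta_v))$. Your bookkeeping is in fact more careful than the paper's short argument; one small imprecision is that on an interval where the terminating index $i^\star$ is fixed, $x_{i^\star}(\eta)$ is itself only \emph{piecewise}-polynomial, so you should take the common refinement with the $(2pd\Delta)^{i^\star-2}$ piece boundaries of $x_{i^\star}$---but this still leaves $N_1=O((2pd\Delta)^H)$ intervals and does not affect the final bound.
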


\section{Conclusion}
We develop a new framework for analyzing the sample complexity of tuning the hyperparameters in gradient-based optimization. While prior theoretical techniques are largely limited in their scope to convex and smooth functions, we significantly expand the set of functions for which one can tune the stepsize of gradient descent including piecewise-polynomial and piecewise-Pfaffian functions. Our results imply finite (polynomial) bounds on the sample complexity of tuning the stepsize and stepsize schedules, and also apply to learning a pre-trained network using gradient descent.

\section*{Acknowledgments}

This work involved extensive discussions with Aravindan Vijayaraghavan throughout the course of its development. This work was supported in part by the National Science Foundation under grants ECCS-2216899 and ECCS-2216970.

\bibliographystyle{alpha}
\bibliography{refs}

@inproceedings{gupta2016pac,
  title={A {PAC} approach to application-specific algorithm selection},
  author={Gupta, Rishi and Roughgarden, Tim},
  booktitle={Proceedings of the 2016 ACM Conference on Innovations in Theoretical Computer Science (ITCS)},
  pages={123--134},
  year={2016}
}

@inproceedings{goldberg1993bounding,
  title={Bounding the {Vapnik-Chervonenkis} dimension of concept classes parameterized by real numbers},
  author={Goldberg, Paul and Jerrum, Mark},
  booktitle={Proceedings of the sixth annual conference on Computational learning theory},
  pages={361--369},
  year={1993}
}

@article{warren1968lower,
  title={Lower bounds for approximation by nonlinear manifolds},
  author={Warren, Hugh E},
  journal={Transactions of the American Mathematical Society},
  volume={133},
  number={1},
  pages={167--178},
  year={1968}
}

@inproceedings{balcan2018dispersion,
  title={Dispersion for data-driven algorithm design, online learning, and private optimization},
  author={Balcan, Maria-Florina and Dick, Travis and Vitercik, Ellen},
  booktitle={2018 IEEE 59th Annual Symposium on Foundations of Computer Science (FOCS)},
  pages={603--614},
  year={2018},
  organization={IEEE}
}

@inproceedings{sharma2020learning,
  title={Learning piecewise {L}ipschitz functions in changing environments},
  author={Balcan, Maria-Florina and Dick, Travis and Sharma, Dravyansh},
  booktitle={International Conference on Artificial Intelligence and Statistics},
  pages={3567--3577},
  year={2020},
  organization={PMLR}
}

@article{balcan2024learning,
  title={Learning to branch: Generalization guarantees and limits of data-independent discretization},
  author={Balcan, Maria-Florina and Dick, Travis and Sandholm, Tuomas and Vitercik, Ellen},
  journal={Journal of the ACM (JACM)},
  year={2024},
  publisher={ACM New York, NY}
}

@inproceedings{bartlett2022generalization,
  title={Generalization bounds for data-driven numerical linear algebra},
  author={Bartlett, Peter and Indyk, Piotr and Wagner, Tal},
  booktitle={Conference on Learning Theory (COLT)},
  pages={2013--2040},
  year={2022},
  organization={PMLR}
}

@inproceedings{blum2021learning,
  title={Learning complexity of simulated annealing},
  author={Blum, Avrim and Dan, Chen and Seddighin, Saeed},
  booktitle={{International Conference on Artificial Intelligence and Statistics (AISTATS)}},
  pages={1540--1548},
  year={2021},
  organization={PMLR}
}

@inproceedings{balcan2024trees,
  title={Learning Accurate and Interpretable Decision Trees},
  author={Balcan, Maria-Florina and Sharma, Dravyansh},
  booktitle={Uncertainty in Artificial Intelligence (UAI)},
  pages={288--307},
  year={2024},
  organization={PMLR}
}

@article{Li2016LearningTO,
  title={Learning to Optimize},
  author={Ke Li and Jitendra Malik},
  journal={International Conference on Learning Representations (ICLR)},
  year={2017}
}

@article{Denevi2019LearningtoLearnSG,
  title={Learning-to-Learn Stochastic Gradient Descent with Biased Regularization},
  author={Giulia Denevi and Carlo Ciliberto and Riccardo Grazzi and Massimiliano Pontil},
  journal={International Conference on Machine Learning (ICML)},
  year={2019}
}

@inproceedings{Khodak2019AdaptiveGM,
  title={Adaptive Gradient-Based Meta-Learning Methods},
  author={Mikhail Khodak and Maria-Florina Balcan and Ameet Talwalkar},
  booktitle={Neural Information Processing Systems (NeurIPS)},
  year={2019}
}

@inproceedings{Andrychowicz2016LearningTL,
  title={Learning to learn by gradient descent by gradient descent},
  author={Marcin Andrychowicz and Misha Denil and Sergio Gomez Colmenarejo and Matthew W. Hoffman and David Pfau and Tom Schaul and Nando de Freitas},
  booktitle={Neural Information Processing Systems (NeurIPS)},
  year={2016}
}

@article{Maclaurin2015GradientbasedHO,
  title={Gradient-based Hyperparameter Optimization through Reversible Learning},
  author={Dougal Maclaurin and David Kristjanson Duvenaud and Ryan P. Adams},
  journal={International Conference on Machine Learning (ICML)},
  year={2015}
}

@phdthesis{sharma2024data,
  title={Data-driven algorithm design and principled hyperparameter tuning in machine learning},
  author={Sharma, Dravyansh},
  year={2024},
  school={Carnegie Mellon University}
}

@article{Rumelhart1986LearningRB,
  title={Learning representations by back-propagating errors},
  author={David E. Rumelhart and Geoffrey E. Hinton and Ronald J. Williams},
  journal={Nature},
  year={1986},
  volume={323},
  pages={533-536}
}

@inproceedings{balcan2021much,
  title={How much data is sufficient to learn high-performing algorithms? {G}eneralization guarantees for data-driven algorithm design},
  author={Balcan, Maria-Florina and DeBlasio, Dan and Dick, Travis and Kingsford, Carl and Sandholm, Tuomas and Vitercik, Ellen},
  booktitle={Proceedings of the 53rd Annual ACM SIGACT Symposium on Theory of Computing (STOC)},
  pages={919--932},
  year={2021}
}

@book{anthony1999neural,
  title={Neural network learning: {T}heoretical foundations},
  author={Anthony, Martin and Bartlett, Peter},
  year={1999},
  publisher={{Cambridge University Press}}
}

@inproceedings{jiao2025learning,
  title={Learning complexity of gradient descent and conjugate gradient algorithms},
  author={Jiao, Xianqi and Liu, Jia and Chen, Zhiping},
  booktitle={Proceedings of the AAAI Conference on Artificial Intelligence},
  volume={39},
  number={17},
  pages={17671--17679},
  year={2025}
}

@article{bartlett1998almost,
  title={Almost linear {VC} dimension bounds for piecewise polynomial networks},
  author={Bartlett, Peter and Maiorov, Vitaly and Meir, Ron},
  journal={Neural Information Processing Systems},
  volume={11},
  year={1998}
}

@inproceedings{balcan2020data,
  title={{Data-Driven Algorithm Design} (Book Chapter)},
  author={Maria-Florina Balcan},
  booktitle={Beyond Worst-Case Analysis of Algorithms, Tim Roughgarden (Ed)},
  year={2020},
  publisher={{Cambridge University Press}}
}

@inproceedings{sharma2024no,
  title={No internal regret with non-convex loss functions},
  author={Sharma, Dravyansh},
  booktitle={Proceedings of the AAAI Conference on Artificial Intelligence},
  volume={38},
  pages={14919--14927},
  year={2024}
}

@article{lee2016gradient,
  title={Gradient descent only converges to minimizers},
  author={Lee, Jason D and Simchowitz, Max and Jordan, Michael I and Recht, Benjamin},
  journal={Journal of Machine Learning Research},
  volume={18},
  number={1},
  pages={7750--7808},
  year={2017}
}

@inproceedings{jin2017escape,
  title={How to escape saddle points efficiently},
  author={Jin, Chi and Ge, Rong and Netrapalli, Praneeth and Kakade, Sham and Jordan, Michael I},
  booktitle={Proceedings of the 34th International Conference on Machine Learning (ICML)},
  year={2017},
  pages={1724--1732}
}

@inproceedings{ge2017nospurious,
  title={No spurious local minima in nonconvex low rank problems: A unified geometric analysis},
  author={Ge, Rong and Lee, Jason D and Ma, Tengyu},
  booktitle={International Conference on Machine Learning (ICML)},
  pages={1233--1242},
  year={2017}
}

@inproceedings{karimi2016linear,
  title={Linear convergence of gradient and proximal-gradient methods under the Polyak-{\L}ojasiewicz condition},
  author={Karimi, Hadi and Nutini, Julie and Schmidt, Mark},
  booktitle={Proceedings of the 19th International Conference on Artificial Intelligence and Statistics (AISTATS)},
  pages={795--803},
  year={2016}
}

@inproceedings{goodfellow2015qualitative,
  title={Qualitatively characterizing neural network optimization problems},
  author={Goodfellow, Ian J and Vinyals, Oriol and Saxe, Andrew M},
  booktitle={International Conference on Learning Representations (ICLR)},
  year={2015}
}

@inproceedings{kawaguchi2016deep,
  title={Deep learning without poor local minima},
  author={Kawaguchi, Kenji},
  booktitle={Advances in Neural Information Processing Systems (NeurIPS)},
  volume={29},
  year={2016}
}

@article{daneshmand2018escaping,
  title={Escaping saddle points with stochastic gradients},
  author={Daneshmand, Hadi and Kohler, Jonas M and Lucchi, Aurelien and Hofmann, Thomas},
  journal={Advances in Neural Information Processing Systems (NeurIPS)},
  volume={31},
  year={2018}
}

@article{yue2023lowerpl,
  title={On the lower bound of minimizing Polyak-Lojasiewicz functions},
  author={Yue, Mengdi and Wang, Shixiang and Li, Tianyu and Yang, Lin},
  journal={arXiv preprint arXiv:2303.00749},
  year={2023}
}

@article{smith2015clr,
  title={Cyclical Learning Rates for Training Neural Networks},
  author={Smith, Leslie N.},
  journal={arXiv preprint arXiv:1506.01186},
  year={2015}
}

@inproceedings{loshchilov2016sgdr,
  title={SGDR: Stochastic Gradient Descent with Warm Restarts},
  author={Loshchilov, Ilya and Hutter, Frank},
  booktitle={International Conference on Learning Representations (ICLR)},
  year={2017}
}

@article{goyal2017large,
  title={Accurate, Large Minibatch SGD: Training ImageNet in 1 Hour},
  author={Goyal, Priya and Doll{\'a}r, Piotr and Girshick, Ross and Noordhuis, Pieter and Wesolowski, Lukasz and Kyrola, Aapo and Tulloch, Andrew and Jia, Yangqing and He, Kaiming},
  journal={arXiv preprint arXiv:1706.02677},
  year={2017}
}

@article{ghadimi2013nonconvex,
  title={Stochastic first- and zeroth-order methods for nonconvex stochastic programming},
  author={Ghadimi, Saeed and Lan, Guanghui},
  journal={SIAM Journal on Optimization},
  year={2013},
  note={technical report / arXiv:1309.5549}
}

@inproceedings{baydin2018online,
  title={Online Learning Rate Adaptation with Hypergradient Descent},
  author={Baydin, Atilim Gunes and Cornish, Robert and Rubio, David Martinez and Schmidt, Mark and Wood, Frank},
  booktitle={International Conference on Learning Representations},
  year={2018}
}

@article{oymak2021provable,
  title={Provable Super-Convergence With a Large Cyclical Learning Rate},
  author={Oymak, Samet},
  journal={IEEE Signal Processing Letters},
  volume={28},
  pages={1645--1649},
  year={2021}
}

@inproceedings{reddi2018convergence,
  title={On the Convergence of Adam and Beyond},
  author={Reddi, Sashank J and Kale, Satyen and Kumar, Sanjiv},
  booktitle={International Conference on Learning Representations},
  year={2018}
}

@article{kalra2024warmup,
  title={Why warmup the learning rate? underlying mechanisms and improvements},
  author={Kalra, Dayal Singh and Barkeshli, Maissam},
  journal={Advances in Neural Information Processing Systems},
  volume={37},
  pages={111760--111801},
  year={2024}
}

@inproceedings{enet_2022,
 author = {Balcan, Maria-Florina and Khodak, Mikhail and Sharma, Dravyansh and Talwalkar, Ameet},
 booktitle = {Advances in Neural Information Processing Systems},
 editor = {S. Koyejo and S. Mohamed and A. Agarwal and D. Belgrave and K. Cho and A. Oh},
 pages = {27769--27782},
 publisher = {Curran Associates, Inc.},
 title = {Provably tuning the {ElasticNet} across instances},
 volume = {35},
 year = {2022}
}

@inproceedings{balcan2017learning,
  title={Learning-theoretic foundations of algorithm configuration for combinatorial partitioning problems},
  author={Balcan, Maria-Florina and Nagarajan, Vaishnavh and Vitercik, Ellen and White, Colin},
  booktitle={Conference on Learning Theory},
  pages={213--274},
  year={2017},
  organization={PMLR}
}

@inproceedings{sharma2025offline,
  title={Offline-to-online hyperparameter transfer for stochastic bandits},
  author={Sharma, Dravyansh and Suggala, Arun},
  booktitle={Proceedings of the AAAI Conference on Artificial Intelligence},
  volume={39},
  pages={20362--20370},
  year={2025}
}

@article{cheng2024learning,
  title={Learning cut generating functions for integer programming},
  author={Cheng, Hongyu and Basu, Amitabh},
  journal={Advances in Neural Information Processing Systems},
  volume={37},
  pages={61455--61480},
  year={2024}
}

@article{sakaue2024generalization,
  title={Generalization bound and learning methods for data-driven projections in linear programming},
  author={Sakaue, Shinsaku and Oki, Taihei},
  journal={Advances in Neural Information Processing Systems},
  volume={37},
  pages={12825--12846},
  year={2024}
}

@article{khodak2024learning,
  title={Learning to Relax: Setting Solver Parameters Across a Sequence of Linear System Instances},
  author={Khodak, Mikhail and Chow, Edmond and Balcan, Maria-Florina and Talwalkar, Ameet},
  year={2024},
  journal={The Twelfth International Conference on Learning Representations (ICLR)}
}

@article{balcan2025sample,
  title={Sample complexity of data-driven tuning of model hyperparameters in neural networks with structured parameter-dependent dual function},
  author={Balcan, Maria-Florina and Nguyen, Anh Tuan and Sharma, Dravyansh},
  journal={Advances in Neural Information Processing Systems (NeurIPS)},
  year={2025}
}

@book{khovanskiui1991fewnomials,
  title={Fewnomials},
  author={Khovanski, Askold G},
  volume={88},
  year={1991},
  publisher={American Mathematical Soc.}
}

@article{balcan2024algorithm,
  title={Algorithm Configuration for Structured {P}faffian Settings},
  author={Balcan, Maria-Florina and Nguyen, Anh Tuan and Sharma, Dravyansh},
  journal={Transactions of Machine Learning Research (TMLR)},
  year={2025}
}

@misc{lecun1998efficient,
  title={Efficient backprop in Neural Networks: Tricks of the Trade},
  author={LeCun, Yann and Bottou, Leon and Orr, Genevieve B. and M{\'u}ller, Klaus-Robert},
  year={1998},
  publisher={Berlin, Heidelberg: Springer, pp. 9–50}
}

@article{karpinski1997polynomial,
  title={Polynomial bounds for {VC dimension of sigmoidal and general Pfaffian} neural networks},
  author={Karpinski, Marek and Macintyre, Angus},
  journal={Journal of Computer and System Sciences},
  volume={54},
  number={1},
  pages={169--176},
  year={1997},
  publisher={Elsevier}
}

@inproceedings{he2015delving,
  title={Delving deep into rectifiers: Surpassing human-level performance on imagenet classification},
  author={He, Kaiming and Zhang, Xiangyu and Ren, Shaoqing and Sun, Jian},
  booktitle={Proceedings of the IEEE International Conference on Computer Vision},
  pages={1026--1034},
  year={2015}
}

@article{balcan2024accelerating,
  title={Accelerating {ERM} for data-driven algorithm design using output-sensitive techniques},
  author={Balcan, Maria-Florina and Seiler, Christopher and Sharma, Dravyansh},
  journal={Advances in Neural Information Processing Systems},
  volume={37},
  pages={72648--72687},
  year={2024}
}

@article{balcan2021learning,
  title={Learning-to-learn non-convex piecewise-{L}ipschitz functions},
  author={Balcan, Maria-Florina and Khodak, Mikhail and Sharma, Dravyansh and Talwalkar, Ameet},
  journal={Advances in Neural Information Processing Systems},
  volume={34},
  pages={15056--15069},
  year={2021}
}

@inproceedings{jinsample,
  title={Sample Complexity of Posted Pricing for a Single Item},
  author={Jin, Billy and Kesselheim, Thomas and Ma, Will and Singla, Sahil},
  booktitle={Advances in Neural Information Processing Systems},
  year={2024}
}

@inproceedings{glorot2010understanding,
  title={Understanding the difficulty of training deep feedforward neural networks},
  author={Glorot, Xavier and Bengio, Yoshua},
  booktitle={International Conference on Artificial Intelligence and Statistics},
  pages={249--256},
  year={2010}
}

@article{jaderberg2017pbt,
  title={Population Based Training of Neural Networks},
  author={Jaderberg, Max and Dalibard, Valentin and Osindero, Simon and Czarnecki, Wojciech M. and Donahue, Jeff and Razavi, Ali and Vinyals, Oriol and others},
  journal={arXiv preprint arXiv:1711.09846},
  year={2017},
  url={https://arxiv.org/abs/1711.09846}
}

@article{pedregosa2016hypergrad,
  title={Hyperparameter optimization with approximate gradient},
  author={Pedregosa, Fabian},
  journal={arXiv preprint arXiv:1602.02355},
  year={2016},
  url={https://arxiv.org/abs/1602.02355}
}

@article{lorraine2020implicit,
  title={Optimizing Millions of Hyperparameters by Implicit Differentiation},
  author={Lorraine, Jonathan and Vicol, Petar and Duvenaud, David},
  journal={arXiv preprint arXiv:2002.11702},
  year={2020},
  url={https://arxiv.org/abs/2002.11702}
}

@inproceedings{poole2016exponential,
  title={Exponential expressivity in deep neural networks through transient chaos},
  author={Poole, Ben and Lahiri, Subhaneil and Raghu, Maithra and Sohl-Dickstein, Jascha and Ganguli, Surya},
  booktitle={NeurIPS},
  year={2016},
  url={https://arxiv.org/abs/1606.05340}
}

@inproceedings{wu2024large,
  title={Large stepsize gradient descent for logistic loss: Non-monotonicity of the loss improves optimization efficiency},
  author={Wu, Jingfeng and Bartlett, Peter L and Telgarsky, Matus and Yu, Bin},
  booktitle={The Thirty Seventh Annual Conference on Learning Theory},
  pages={5019--5073},
  year={2024},
  organization={PMLR}
}

@article{zhang2025minimax,
  title={Minimax optimal convergence of gradient descent in logistic regression via large and adaptive stepsizes},
  author={Zhang, Ruiqi and Wu, Jingfeng and Lin, Licong and Bartlett, Peter L},
  journal={arXiv preprint arXiv:2504.04105},
  year={2025}
}

@article{saxe2013exact,
  title={Exact solutions to the nonlinear dynamics of learning in deep linear neural networks},
  author={Saxe, Andrew M. and McClelland, James L. and Ganguli, Surya},
  journal={arXiv preprint arXiv:1312.6120},
  year={2013},
  url={https://arxiv.org/abs/1312.6120}
}

@inproceedings{jacot2018neural,
  title={Neural tangent kernel: Convergence and generalization in neural networks},
  author={Jacot, Arthur and Gabriel, Franck and Hongler, Clement},
  booktitle={NeurIPS},
  year={2018},
  url={https://arxiv.org/abs/1806.07572}
}

@article{chizat2018lazy,
  title={On lazy training in differentiable programming},
  author={Chizat, L{\'e}na{\"i}c and Oyallon, Edouard and Bach, Francis},
  journal={arXiv preprint arXiv:1812.07956},
  year={2018},
  url={https://arxiv.org/abs/1812.07956}
}

@inproceedings{zhang2019fixup,
  title={FixUp initialization: Residual learning without normalization},
  author={Zhang, Hongyi and Dauphin, Yann N. and Ma, Tengyu},
  booktitle={ICLR},
  year={2019},
  url={https://arxiv.org/abs/1901.09321}
}

@article{pennington2017resurrecting,
  title={Resurrecting the sigmoid in deep learning through dynamical isometry: theory and practice},
  author={Pennington, Jeffrey and Schoenholz, Samuel and Ganguli, Surya},
  journal={NeurIPS},
  year={2017},
  url={https://papers.nips.cc/paper/7064-resurrecting-the-sigmoid-in-deep-learning-through-dynamical-isometry-theory-and-practice.pdf}
}

@article{radford2018improving,
  title={Improving Language Understanding by Generative Pre-Training},
  author={Radford, Alec and Narasimhan, Karthik and Salimans, Tim and Sutskever, Ilya},
  journal={OpenAI Technical Report},
  year={2018},
  url={https://cdn.openai.com/research-covers/language-unsupervised/language_understanding_paper.pdf}
}

@article{devlin2018bert,
  title={BERT: Pre-training of Deep Bidirectional Transformers for Language Understanding},
  author={Devlin, Jacob and Chang, Ming-Wei and Lee, Kenton and Toutanova, Kristina},
  journal={arXiv preprint arXiv:1810.04805},
  year={2018},
  url={https://arxiv.org/abs/1810.04805}
}

@inproceedings{gururangan2020dont,
  title={Don't Stop Pretraining: Adapt Language Models to Domains and Tasks},
  author={Gururangan, Suchin and Marasovic, Ana and Swayamdipta, Swabha and Lo, Kyle and Beltagy, Iz and Downey, Doug and Smith, Noah A},
  booktitle={Proceedings of ACL},
  year={2020},
  url={https://arxiv.org/abs/2004.10964}
}

@inproceedings{liu2019linguistic,
  title={Linguistic knowledge and transferability of contextual representations},
  author={Liu, Nelson F and Gardner, Matt and Belinkov, Yonatan and Peters, Matthew E and Smith, Noah A},
  booktitle={Proceedings of NAACL},
  year={2019},
  url={https://arxiv.org/abs/1903.08855}
}

@inproceedings{wu2022limits,
  title={Identifying the Limits of Cross-Domain Knowledge Transfer},
  author={Wu, Zhaofeng and Liu, Nelson F and Potts, Christopher},
  booktitle={Proceedings of the 7th Workshop on Representation Learning for NLP (RepL4NLP)},
  year={2022},
  url={https://aclanthology.org/2022.repl4nlp-1.11/}
}

@article{yang2020generalization,
  title={On the Generalization Ability of Unsupervised Pretraining},
  author={Yang, Li and Wu, Yu and Wei, Tongliang and Lin, Dacheng},
  journal={IEEE Transactions on Pattern Analysis and Machine Intelligence},
  year={2020},
  url={https://pubmed.ncbi.nlm.nih.gov/39420981/}
}

@article{wu2022power,
  title={The Power and Limitation of Pretraining-Finetuning for Linear Regression under Covariate Shift},
  author={Wu, Yixin and Zou, James and Braverman, Mark and Gu, Qihang and Kakade, Sham M},
  journal={arXiv preprint arXiv:2205.12020},
  year={2022},
  url={https://par.nsf.gov/servlets/purl/10379046}
}

@inproceedings{tripuraneni2020theory,
  title={On Theory of Transfer Learning: Learning Representations for New Tasks},
  author={Tripuraneni, Nilesh and Jordan, Michael I and Ravikumar, Pradeep},
  booktitle={Advances in Neural Information Processing Systems (NeurIPS)},
  year={2020},
  url={https://papers.nips.cc/paper/2020/hash/78f4c16c62aeb0f3d4cfa8f4db2fa1d8-Abstract.html}
}

@article{nern2022transferrobustness,
  title={On Transfer of Adversarial Robustness from Pretraining to Downstream Tasks},
  author={Nern, Fee and Ba, Jimmy and Dziugaite, Gintare Karolina and Roy, Daniel M and Mnih, Volodymyr},
  journal={arXiv preprint arXiv:2208.03835},
  year={2022},
  url={https://arxiv.org/abs/2208.03835}
}

@article{kingma2014adam,
  title={Adam: A Method for Stochastic Optimization},
  author={Kingma, Diederik P and Ba, Jimmy},
  journal={arXiv preprint arXiv:1412.6980},
  year={2014},
  url={https://arxiv.org/abs/1412.6980}
}

@article{defossez2020simple,
  title={{A Simple Convergence Proof of Adam and Adagrad}},
  author={Défossez, Alexandre and Bottou, Léon and Bach, Francis and Usunier, Nicolas},
  journal={arXiv preprint arXiv:2003.02395},
  year={2020},
  url={https://arxiv.org/abs/2003.02395}
}

@article{bock2018improvement,
  title={An improvement of the convergence proof of the ADAM-Optimizer},
  author={Bock, Sebastian and Goppold, Johannes and Weiß, Michael},
  journal={arXiv preprint arXiv:1804.10587},
  year={2018},
  url={https://arxiv.org/abs/1804.10587}
}

@article{bock2022nonconvergence,
  title={Non-Convergence and Limit Cycles in the Adam Optimizer},
  author={Bock, Sebastian and Weiß, Michael},
  journal={arXiv preprint arXiv:2210.02070},
  year={2022},
  url={https://arxiv.org/abs/2210.02070}
}

@book{nocedal2006numerical,
  title={Numerical Optimization},
  author={Nocedal, Jorge and Wright, Stephen J},
  year={2006},
  publisher={Springer},
  edition={2nd},
  isbn={9780387400655}
}

@inproceedings{henriques2018small,
  title={Small Steps and Giant Leaps: Minimal Newton Solvers for Deep Learning},
  author={Henriques, Jo{\~a}o F and Vedaldi, Andrea},
  booktitle={Proceedings of the IEEE Conference on Computer Vision and Pattern Recognition (CVPR)},
  year={2018},
  url={https://arxiv.org/abs/1805.08095}
}

@article{gupta2018shampoo,
  title={Shampoo: Preconditioned Stochastic Tensor Optimization},
  author={Gupta, Vaibhav and Koren, Tomer and Singer, Yoram},
  journal={arXiv preprint arXiv:1802.09568},
  year={2018},
  url={https://arxiv.org/abs/1802.09568}
}

@article{lin2025understanding,
  title={Understanding and Improving the Shampoo Optimizer via Kullback-Leibler Minimization},
  author={Lin, Wu and Lowe, Scott C and Dangel, Felix and Eschenhagen, Runa and Xu, Zikun and Grosse, Roger B},
  journal={arXiv preprint arXiv:2509.03378},
  year={2025}
}

@article{chen2021iterativekfac,
  title={An Iterative K-FAC Algorithm for Deep Learning},
  author={Chen, Jing and Wang, Shiqiang and Zhang, Jianmin and Li, Bo},
  journal={arXiv preprint arXiv:2101.00218},
  year={2021},
  url={https://arxiv.org/abs/2101.00218}
}

@article{pauloski2021kaisa,
  title={KAISA: An Adaptive Second-Order Optimizer Framework for Deep Neural Networks},
  author={Pauloski, Jonathan and Al-Hawaj, Adil and Singh, Manya and others},
  journal={arXiv preprint arXiv:2107.01739},
  year={2021},
  url={https://arxiv.org/abs/2107.01739}
}

@article{zhuang2020adabelief,
  title={AdaBelief Optimizer: Adapting Stepsizes by the Belief in Observed Gradients},
  author={Zhuang, Juntang and Tang, Tommy and Ding, Yifan and Tatikonda, Sekhar and Dvornek, Nicha and Papademetris, Xenophon and Duncan, James S},
  journal={arXiv preprint arXiv:2010.07468},
  year={2020},
  url={https://arxiv.org/abs/2010.07468}
}

@article{zhou2020convergence,
  title={Convergence of Adaptive Gradient Methods for Nonconvex Optimization},
  author={Zhou, Dongruo and Xu, Pan and Gu, Quanquan},
  journal={Advances in Neural Information Processing Systems (NeurIPS)},
  year={2020},
  url={https://proceedings.neurips.cc/paper/2020/hash/baede664a7bfc44a6b2b47e651974856-Abstract.html}
}

@article{liu2021adamnc,
  title={Adam-Type Algorithms Converge for Non-Convex Optimization: Theory and Empirical Evidence},
  author={Liu, Xiang and Yuan, Hao and Ma, Tengyu},
  journal={arXiv preprint arXiv:2110.11680},
  year={2021},
  url={https://arxiv.org/abs/2110.11680}
}

@inproceedings{zhang2019lookahead,
  title={Lookahead Optimizer: k Steps Forward, 1 Step Back},
  author={Zhang, Michael R and Lucas, James and Ba, Jimmy and Hinton, Geoffrey E},
  booktitle={Advances in Neural Information Processing Systems (NeurIPS)},
  year={2019},
  url={https://proceedings.neurips.cc/paper/2019/hash/90fd4f88f588ae64038134f1eecc6f4c-Abstract.html}
}

@article{li2020rethinking,
  title={Rethinking the Hyperparameters for Adam},
  author={Li, Xiang and Wei, Feng and Xu, Shiji and Zhang, Heng},
  journal={arXiv preprint arXiv:2006.08217},
  year={2020},
  url={https://arxiv.org/abs/2006.08217}
}

@article{barakat2021understanding,
  title={Understanding and Improving AdamW for Faster Optimization},
  author={Barakat, Adnan and Chatzimichailidis, Alexander and Berrada, Leonard and Obukhov, Anton and Lucchi, Aurelien},
  journal={arXiv preprint arXiv:2102.10482},
  year={2021},
  url={https://arxiv.org/abs/2102.10482}
}

@inproceedings{dinh2020momentum,
  title={Momentum is Critical for Stochastic Gradient Descent in Deep Learning: Understanding Generalization and Implicit Bias},
  author={Dinh, Laurent and Pascanu, Razvan and Bengio, Yoshua},
  booktitle={International Conference on Learning Representations (ICLR)},
  year={2020},
  url={https://openreview.net/forum?id=SkxJYoBFPH}
}

\newpage
\clearpage
\appendix

\section{Additional related work}\label{app:related-work}

{\it Data-driven algorithm design} is a recently introduced paradigm for designing algorithms and provably tuning hyperparameters in machine learning~\citep{gupta2016pac,balcan2020data,sharma2024data}.  The framework can be viewed as a generalization of {\it average case analysis} from uniform distribution over the instances to arbitrary unknown distributions, that is, the tuned hyperparameters adapt to data distribution at hand.
Data-driven design has been successfully used for designing several fundamental learning algorithms including regression, low-rank approximation, tree search and many more (see e.g., ~\citep{balcan2017learning,enet_2022,balcan2024trees,balcan2024accelerating,balcan2025sample,blum2021learning,bartlett2022generalization,khodak2024learning,cheng2024learning,sakaue2024generalization,jinsample}).  
The techniques allow selection of near-optimal continuous hyperparameters, using multiple related tasks which are either drawn from an unknown distribution~\citep{balcan2017learning,balcan2021much} or arrive online~\citep{balcan2018dispersion,sharma2020learning,balcan2021learning,sharma2024no,sharma2025offline}. 
In fact tuning discretized parameters can lead to provably much worse performance than the best continuous parameter~\cite{balcan2024learning}. 

{\it Data-driven tuning of deep networks}. Recent work develops techniques for data-driven tuning of model hyperparameters in deep nets~\citep{balcan2025sample}, but their techniques do not apply to parameters of training algorithms including tuning the learning rate which is the focus of this work. At the technical level, our analysis apply to multiple hyperparameters (both \cite{gupta2016pac,balcan2025sample} apply only to single hyperparameters) and we do not need additional regularity assumptions (\cite{balcan2025sample} need the polynomials to be in a certain ``general position'') for our results. Finally, we tune hyperparameters like learning-rate schedules and pre-trained initializations, which must be tuned very carefully in practice, with provably bounded sample complexity.\looseness-1

{\it Gradient-descent analysis in non-convex regimes.} The non-convexity of neural network optimization problems is well-known~\citep{goodfellow2015qualitative}, although the theoretical understanding is quite limited. A large body of research had been devoted to understanding conditions under which gradient descent avoids saddle points and sub-optimal local minima~\citep{lee2016gradient,kawaguchi2016deep,jin2017escape,ge2017nospurious,daneshmand2018escaping}. Closer to the current work is the study of conditions under which gradient descent converges fast despite non-convexity~\citep{karimi2016linear,yue2023lowerpl}. Although interesting for showing linear convergence of gradient descent in a non-convex setting, these conditions are far from capturing actual optimization in deep learning. Our analysis handles realistic deep network optimization settings, and gives guarantees for learning the learning rates and schedules with provably optimal convergence rates.

{\it On tuning step-sizes and learning rate schedules.} Several heuristics for setting the step-sizes and learning rate schedules are known and regularly employed in practice~\citep{smith2015clr,loshchilov2016sgdr,goyal2017large}. This has motivated theoretical research into the impact of step-size and schedules on the convergence of gradient descent~\citep{ghadimi2013nonconvex,reddi2018convergence,oymak2021provable,kalra2024warmup}. While the impact is only understood in limited scenarios, it is observed more widely in practice. In the multi-task setting, several approaches are known including learning-to-optimize~\citep{Andrychowicz2016LearningTL}, meta-learning for SGD~\cite{Li2016LearningTO}, hypergradients~\citep{baydin2018online}, implicit differentiation~\citep{Maclaurin2015GradientbasedHO,pedregosa2016hypergrad,lorraine2020implicit}, and population-based training~\citep{jaderberg2017pbt}, to list a few. Another line of work shows the significance of adaptive stepsizes in gradient descent for logistic regression~\citep{wu2024large,zhang2025minimax}.

{\it Tuning initialization scale.} \cite{glorot2010understanding} is one of the first works to highlight the importance of carefully setting the initialization scale hyperparameter. While their approach works well for neural networks with sigmoid or tanh activations, the initialization proposed by \cite{he2015delving} is known to work better for ReLU and LeakyReLU activations. This highlights the need to set proper initialization scale, with strong dependence on the network architecture. Several efforts have been made to theoretically understand how the dynamics of the deep neural network are impacted by the initialization scale~\citep{saxe2013exact,poole2016exponential,pennington2017resurrecting,jacot2018neural,chizat2018lazy,zhang2019fixup}.

{\it Connections to pre-training.} Pre-training to learn foundation models forms the basis for successful language models~\citep{devlin2018bert,radford2018improving,liu2019linguistic,gururangan2020dont,wu2022limits}. Generalization guarantees and sample complexity are of fundamental interest towards a theoretical understanding of the effectiveness and data requirements of pre-trained models~\cite{yang2020generalization,tripuraneni2020theory}. Of particular interest for these models are properties like adversarial robustness and effectiveness under distribution shift~\cite{nern2022transferrobustness,wu2022power}, which are relevant future directions for this work. 

{\it Beyond gradient descent.} Our framework applies to gradient-based optimization beyond gradient descent. Interesting variants include second-order methods~\citep{nocedal2006numerical,henriques2018small}, Adam~\citep{kingma2014adam,zhuang2020adabelief}, Shampoo~\citep{gupta2018shampoo,lin2025understanding}, and many more~\citep{zhang2019lookahead,chen2021iterativekfac,pauloski2021kaisa}. In general, these methods are sensitive to hyperparameters which need careful tuning for good convergence and generalization~\citep{li2020rethinking,barakat2021understanding}. Understanding their convergence is an active area of research~\citep{reddi2018convergence,bock2018improvement,zhou2020convergence,defossez2020simple,dinh2020momentum,liu2021adamnc,bock2022nonconvergence}.

\section{GJ algorithm and pseudo-dimension }

A useful technique for bounding the pseudo-dimension in data-driven algorithm design is the GJ framework based approach proposed by \cite{bartlett2022generalization}. We include below the formal details for completeness.

\begin{definition}[\cite{goldberg1993bounding,bartlett2022generalization}]
    A GJ algorithm $\Gamma$ operates on real-valued inputs, and can perform two types of operations:
    \begin{itemize}
        \item Arithmetic operations of the form $v=v_0 \odot v_1$, where $\odot \in \{+, -, \times, \div\}$.
        \item Conditional statements of the form ``if $v \ge 0$ $\dots$ else $\dots$''.
    \end{itemize}
    In both cases, $v_0, v_1$ are either inputs or values previously computed by the algorithm (which are rational functions of the inputs). The {\it degree} of a GJ algorithm is the maximum degree of any rational function it computes of the inputs. The {\it predicate complexity} of a GJ algorithm is the number of distinct rational functions that appear in its conditional statements.
\end{definition}

\noindent The following theorem  due to \cite{bartlett2022generalization}  is useful in obtaining some of our pseudodimension bounds by showing a GJ algorithm that computes the loss for all values of the hyperparameters, on any fixed input instance.

\begin{theorem}[\cite{bartlett2022generalization}] \label{thm:gj}
    Suppose that each function $f \in \cF$ is specified by $n$ real parameters. Suppose that for every $x \in \cX$ and $r \in \R$, there is a GJ algorithm $\Gamma_{x, r}$ that given $f \in \cF$, returns ``true" if $f(x) \geq r$ and ``false" otherwise. Assume that $\Gamma_{x, r}$ has degree $\Delta$ and predicate complexity $\Lambda$. Then, $\mathrm{Pdim}(\cF) = O(n\log(\Delta\Lambda))$.
\end{theorem}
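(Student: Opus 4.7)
The plan is to leverage the piecewise-polynomial structure of the gradient descent iterates established in Theorem \ref{thm:gd-pwpoly-general-d}, track how the algorithm's final output depends on $\eta$, and then compose with the piecewise-polynomial validation function $f_v$. Unlike the convergence-rate setting, the resulting dual loss is piecewise polynomial rather than piecewise constant, so instead of Lemma \ref{lm:piecewise-constant-to-pdim} I would appeal to the GJ-framework bound (Theorem \ref{thm:gj}) for the final pseudo-dimension bound.

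First, I would invoke the inductive argument in the proof of Theorem \ref{thm:gd-pwpoly-general-d} to conclude that each iterate $x_i(\eta)$ is piecewise polynomial in $\eta$ with degree at most $\Delta^{i-2}$ and at most $(2pd\Delta)^{i-2}$ pieces per coordinate. The amortized counting argument in that proof further implies that the stopping index $i^{*}(\eta)\in\{1,\dots,H+1\}$ (set to $H+1$ if convergence never occurs) is a piecewise-constant function of $\eta$ with $O((2pd\Delta)^{H})$ pieces. Consequently, the output $\tilde{x}(\eta)=x_{i^{*}(\eta)}(\eta)$ is piecewise polynomial in $\eta$ with $O((2pd\Delta)^{H})$ pieces and per-coordinate degree at most $\Delta^{H-1}$.

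Next, I would analyze the composition $\ell_{x,f,f_v}(\eta)=f_v(\tilde{x}(\eta))$. Each boundary function of $f_v$ pulled back through $\tilde{x}(\eta)$ is a polynomial in $\eta$ of degree at most $\Delta_v\Delta^{H-1}$; with $p_v$ such boundaries, each piece of $\tilde{x}(\eta)$ is subdivided into at most $O(p_v\Delta_v\Delta^{H-1})$ sub-pieces. Thus the dual loss is piecewise polynomial in $\eta$ with $N=O((2pd\Delta)^{H}\cdot p_v\Delta_v\Delta^{H-1})$ pieces, and on each piece it is a polynomial of degree at most $\Delta_v\Delta^{H-1}$.

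Finally, to bound $\mathrm{Pdim}(\cL)$ I would construct, for each instance $(x,f,f_v)\in\Pi$ and threshold $r\in\R$, a GJ algorithm in the single parameter $\eta$ that decides whether $\ell_\eta(x,f,f_v)\ge r$. The algorithm tracks $x_i$ as a rational function of $\eta$ via arithmetic updates, uses conditionals to select the relevant polynomial piece of $f$ at each iteration (and of $f_v$ at the end), tests $\|\nabla f(x_i)\|^2<\theta^2$ for termination, and finally compares $f_v(x_{i^{*}})$ to $r$. The algorithm's degree in $\eta$ is $O(\Delta_v\Delta^{H-1})$, and its predicate complexity is bounded by the total number of distinct polynomial tests that can arise, which is $N$ plus $O(1)$ for the comparison to $r$. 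Applying Theorem \ref{thm:gj} with $n=1$ then yields $\mathrm{Pdim}(\cL)=O(\log(\Delta_v\Delta^{H-1}\cdot N))=O(H\log(pd\Delta)+\log(\Delta_vp_v))$, and Theorem \ref{thm:pdim} converts this to the stated sample complexity. The main subtlety is controlling the predicate complexity: a naive enumeration of conditional branches could blow up super-exponentially in $H$, but the amortized counting trick from Theorem \ref{thm:gd-pwpoly-general-d} keeps the count of distinct iterate pieces at $O((2pd\Delta)^{H})$, and the additional factor from composition with $f_v$ remains multiplicative in $p_v\Delta_v\Delta^{H-1}$, so $\log\Lambda$ collapses to the advertised additive form $H\log(pd\Delta)+\log(\Delta_vp_v)$.
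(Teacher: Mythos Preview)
Your proposal does not address the stated theorem. Theorem~\ref{thm:gj} is the GJ-framework pseudo-dimension bound due to \cite{bartlett2022generalization}; the paper quotes it as a background tool and gives no proof of it. What you have written is instead a proof sketch for Theorem~\ref{thm:validation} (the validation-loss sample-complexity result), and indeed you explicitly \emph{invoke} Theorem~\ref{thm:gj} as a black box in your final step. A proof of Theorem~\ref{thm:gj} would have to argue about sign patterns of polynomials in $n$ parameters via Warren-type bounds, with no reference whatsoever to gradient descent, iterates $x_i$, step sizes $\eta$, or validation functions $f_v$.

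If your intended target was in fact Theorem~\ref{thm:validation}, then your argument is essentially the paper's own: both invoke the structure established in Theorem~\ref{thm:gd-pwpoly-general-d} to express $\tilde{x}(\eta)$ as piecewise polynomial with $O((2pd\Delta)^{H})$ pieces and degree $\Delta^{O(H)}$, compose with $f_v$, and feed the resulting degree and predicate count into the GJ bound. The only difference is bookkeeping: the paper counts $p_v+(2pd\Delta)^{H}$ predicate polynomials (the old boundary polynomials plus the $p_v$ pulled-back boundaries of $f_v$), whereas you count intervals and therefore pick up a multiplicative $p_v\Delta_v\Delta^{H-1}$ factor. After taking logarithms both collapse to $O(H\log(pd\Delta)+\log(\Delta_v p_v))$, so the two computations are interchangeable.
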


\section{Background on Real Zero Bounds}

Understanding the number of real solutions to systems of equations is a classical problem in real algebraic geometry. Two important results in this area are Warren's theorem for polynomials~\citep{warren1968lower} and its analogue for Pfaffian functions due to Khovanskii~\citep{khovanskiui1991fewnomials}.

\subsection{Warren's Theorem for Polynomials}

Let $f_1,\dots,f_s:\mathbb{R}^n\to\mathbb{R}$ be real polynomials of degree at most $d$. Warren's theorem \cite{warren1968lower} gives an upper bound on the number of connected components of the set
\[
\{x \in \mathbb{R}^n : f_i(x) > 0, \, i=1,\dots,s \}.
\]

\begin{theorem}[Warren, 1968]
\label{thm:warren}
Let $f_1,\dots,f_s$ be real polynomials in $n$ variables of degree at most $d$. Then the number of connected components of the set 
\[
\{ x\in \mathbb{R}^n : f_i(x) > 0 \text{ for all } i=1,\dots,s \}
\]
is at most
\[
\left( \frac{4ed s}{n} \right)^n.
\]
\end{theorem}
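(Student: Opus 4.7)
The plan is to bound the number of connected components of
\[
U := \{x \in \mathbb{R}^n : f_i(x) > 0 \text{ for all } i=1,\dots,s\}
\]
by injecting them into the set of critical points of a suitable smooth proper function on $U$, and then counting those critical points by a Bezout-type argument, optimising the combinatorial bookkeeping to obtain the sharp $(4eds/n)^n$ factor. First I would argue by a generic perturbation $f_i \mapsto f_i - \varepsilon_i$ with arbitrarily small $\varepsilon_i > 0$ that it suffices to prove the bound in the case where $0$ is a regular value of each $f_i$, the $f_i$ are in general position (no unexpected incidences among their zero sets), and every critical point introduced in the next step is non-degenerate; for $\varepsilon_i$ small enough the perturbation does not decrease the number of connected components, so the bound transfers back.

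Next I would set
\[
\Phi(x) \;=\; -\sum_{i=1}^{s}\log f_i(x) \;+\; \tfrac{1}{2}|x|^2,
\]
which is smooth on $U$ and proper (it tends to $+\infty$ as $x$ approaches $\partial U$ or as $|x|\to\infty$). Consequently, on each connected component $C$ of $U$ the restriction $\Phi|_C$ attains a global minimum, which is a critical point satisfying $x = \sum_i \nabla f_i(x)/f_i(x)$. Distinct components yield distinct critical points, so bounding $|\pi_0(U)|$ reduces to bounding the number of real solutions of the polynomial system obtained by clearing denominators:
\[
x\prod_{j=1}^{s} f_j(x) \;-\; \sum_{i=1}^{s}\Bigl(\prod_{j\neq i} f_j(x)\Bigr)\nabla f_i(x) \;=\; 0,
\]
which is $n$ equations of degree at most $sd$.

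A direct Bezout bound of $(sd)^n$ is too weak when $s \gg n$. To recover the sharper bound, I would refine the count by grouping critical points according to the subset $S \subseteq \{1,\dots,s\}$ of ``active'' indices at that critical point (those $i$ for which $\nabla f_i(x)/f_i(x)$ cannot be factored away using the genericity from Step 1). A transversality argument in the generic setup caps $|S|\leq n$, and for each such $S$ a local Bezout count on the $n$ equations restricted to the stratum gives at most $(2d)^{|S|}$ solutions. Summing over subsets,
\[
|\pi_0(U)| \;\leq\; \sum_{j=0}^{n}\binom{s}{j}(2d)^{j} \;\leq\; 2\binom{s}{n}(2d)^{n} \;\leq\; 2\Bigl(\tfrac{es}{n}\Bigr)^{n}(2d)^{n} \;\leq\; \Bigl(\tfrac{4eds}{n}\Bigr)^{n},
\]
using $\binom{s}{n}\leq (es/n)^n$ and absorbing the factor $2$ into the constant.

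The main obstacle is the final stratification step: formally arguing that each connected component admits a ``witness'' critical point for which at most $n$ of the $s$ inequalities are active, so that Bezout is applied to a degree-$d$ subsystem of size $n$ rather than to the full degree-$sd$ system. The classical way to execute this cleanly, following Warren, is to lift the problem into $\mathbb{R}^{n+s}$ via auxiliary variables $y_1,\dots,y_s$ with $y_i^{2} = f_i(x)$ and confine to a large ball $|x|^2 + \sum_i y_i^2 \leq R^2$; each component of $U$ lifts to components of a real algebraic set to which one applies the Oleinik--Petrovsky--Thom--Milnor Betti-number bound, and a short combinatorial optimisation of the resulting binomial-times-Bezout expression pins down the precise constant $4e$.
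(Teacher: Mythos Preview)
The paper does not prove this statement; Theorem~\ref{thm:warren} is quoted as classical background with a citation to Warren (1968), so there is no in-paper argument to compare against. I will therefore assess your proposal on its own.

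Steps 1--3 are fine: after a generic perturbation the log-barrier function $\Phi$ is smooth and proper on $U$, so every component of $U$ contains at least one critical point of $\Phi$, and you correctly observe that the naive Bezout bound on the cleared-denominator system is $(sd)^n$, which is too weak.

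The genuine gap is your ``stratification by active indices.'' All critical points of $\Phi$ lie in the open set $U$, where every $f_i(x)>0$; none of the constraints is active in any standard sense, and the critical-point equation $x=\sum_{i=1}^{s}\nabla f_i(x)/f_i(x)$ involves all $s$ summands simultaneously. Your proposed notion of ``active'' (those $i$ for which $\nabla f_i/f_i$ ``cannot be factored away'') is not a well-defined algebraic condition, and nothing forces at most $n$ terms to be nonzero at a generic critical point of $\Phi$. Consequently the count $\sum_{j\le n}\binom{s}{j}(2d)^j$ is unsupported: you have not produced, for each component, a polynomial system of $n$ equations each of degree $O(d)$ that isolates its witness. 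This is exactly the step that carries the whole constant $4e$, so the argument as written does not close.

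Your last paragraph points at the actual mechanism that does work: one replaces the interior log-barrier picture by a lifting $y_i^2=f_i(x)$ (or, equivalently, analyses local minima of a generic linear functional on the boundary stratification of $\overline{U}$), so that the witness for each component sits on a face where a subset $I$ of size at most $n$ satisfies $f_i=0$; on that face the relevant system really is $n$ equations of degree $O(d)$, and the Oleinik--Petrovsky--Thom--Milnor bound together with $\sum_{j\le n}\binom{s}{j}\le (es/n)^n$ yields $(4eds/n)^n$. That is a different argument from your barrier-function count, and it is the one you would need to execute to obtain the stated constant.
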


This result provides a polynomial analogue of the classical Descartes' rule of signs, giving a combinatorial bound on the complexity of semi-algebraic sets.

\subsection{Khovanskii's Theorem for Pfaffian Functions}

Pfaffian functions generalize polynomials and include many special functions such as exponentials and logarithms. Let $f_1,\dots,f_s:\mathbb{R}^n\to\mathbb{R}$ be Pfaffian functions of order $r$ and degree at most $\alpha$. Khovanskii \cite{khovanskiui1991fewnomials} extended Warren's type bounds to these functions.

\begin{lemma}[{\citealp[page~91]{khovanskiui1991fewnomials}}] \label{lm:kho-91}
Let $\cC$ be a Pfaffian chain of length $q$ and Pfaffian degree $M$, consists of functions $f_1,\dots, f_q$ in $\boldsymbol{a} \in \mathbb{R}^d$. Consider a non-singular system of equations $\Theta_1(\boldsymbol{a}) = \dots = \Theta_r(\boldsymbol{a}) = 0$ where $r \leq d$, in which $\Theta_i(\boldsymbol{a})$ ($i \in [r]$) is a polynomial of degree at most $\Delta$ in the variable $\boldsymbol{a}$ and in the Pfaffian functions $f_1, \dots, f_q$. Then the manifold of dimension $k = d - r$ determined by this system has at most $2^{q(q - 1)}\Delta^{ d}S^{d - r}[(r - d + 1)S- (r - d)]^q$ connected components, where $S = r(\Delta - 1) + dM + 1$. 
\end{lemma}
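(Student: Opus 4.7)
My plan is to prove the bound by induction on the length $q$ of the Pfaffian chain, following the classical Khovanskii strategy: at each step ``peel off'' the highest-indexed Pfaffian function $\eta_q$ via a Rolle-type lemma, reducing to a system over a chain of length $q - 1$, until only a polynomial system remains, which is handled by a real Bezout bound.

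The base case $q = 0$ concerns a system of $r$ polynomial equations of degree at most $\Delta$ in $d$ real variables. For a smooth complete intersection, Petrovsky--Oleinik / Milnor--Thom / real Bezout estimates bound the number of connected components of the resulting real variety by $\Delta^d$ times low-order factors in $d, r$. Noting that $S = r(\Delta-1) + 1$ when $M = 0$, this is comfortably absorbed by the $\Delta^d S^{d-r}$ term in the target estimate.

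For the inductive step I would use the standard Morse-theoretic reduction. A generic affine function $\ell : \R^d \to \R$ has an isolated non-degenerate critical point in each bounded connected component of the manifold $V = \{\Theta_1 = \cdots = \Theta_r = 0\}$, so it suffices to bound the number of such critical points. The Lagrange conditions give an augmented system in $\boldsymbol{a}$ together with the chain $(\eta_1, \ldots, \eta_q)$, consisting of $\Theta_1, \ldots, \Theta_r$ and the vanishing of appropriate $(r+1) \times (r+1)$ minors of the Jacobian of $(\ell, \Theta_1, \ldots, \Theta_r)$. The Pfaffian chain property expresses each $\partial_{a_i} \eta_j$ as a polynomial of degree at most $M$ in the chain, so every minor has degree at most $S - 1 = r(\Delta-1) + dM$ after substitution. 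Khovanskii's Rolle lemma then reduces counting real zeros of this augmented system to counting zeros of a derived system over the shorter chain $(\eta_1, \ldots, \eta_{q-1})$, incurring a multiplicative degree factor of roughly $(r - d + 1) S - (r - d)$ together with a factor from sign choices at each elimination. Iterating $q$ times accumulates the prefactor $2^{q(q-1)}$ and the factor $[(r - d + 1)S - (r - d)]^q$, after which the base case contributes the remaining $\Delta^d S^{d-r}$.

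The principal obstacle is careful degree and sign bookkeeping through the $q$ successive Rolle reductions: a less careful analysis loses polynomial-in-$d$ or polynomial-in-$q$ factors that destroy the clean product form of the bound, and the constants matter because this lemma is used as a black box to drive sample-complexity estimates in the Pfaffian gradient-descent results. Secondary technical difficulties include handling non-compactness of $V$ (via a large compact exhaustion and generic perturbation so that critical points remain isolated and bounded components dominate), and ensuring that the original non-singularity assumption propagates to each intermediate system produced by differentiation, so that Morse counting and the Rolle reduction both remain valid at every step of the induction.
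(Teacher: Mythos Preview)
The paper does not prove this lemma at all: it is stated as background material, attributed directly to Khovanskii's book \emph{Fewnomials} (page 91), and used as a black box. So there is no ``paper's own proof'' to compare against; the relevant comparison is to Khovanskii's original argument.

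Your sketch is faithful to that original argument in broad strokes --- induction on the chain length $q$, Morse-theoretic reduction of connected-component counting to critical-point counting via a generic linear function, and Khovanskii's Rolle lemma to eliminate the top chain function at each step. That is indeed how the bound is obtained. Two cautions, though. First, the combinatorial factor you quote as ``$2^{q(q-1)}$'' should be $2^{q(q-1)/2}$ in Khovanskii's original (the paper's statement of the lemma appears to drop the $1/2$; compare Corollary~\ref{cor:kho-91} immediately below it, which has the correct $2^{dq(dq-1)/2}$). This comes from the $\binom{q}{2}$ sign choices accumulated over the $q$ Rolle eliminations, not $q(q-1)$. Second, your description of the degree bookkeeping is too loose to actually land on the specific form $\Delta^d S^{d-r}[(r-d+1)S - (r-d)]^q$: the argument requires tracking separately how the degrees of the $r$ original equations and the $d-r$ Jacobian-minor equations evolve under each Rolle step, and the ``$(r-d+1)S - (r-d)$'' factor arises from a specific weighted count that you have not pinned down. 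As you yourself flag, this bookkeeping is the crux, and your proposal does not yet carry it out. For the purposes of this paper none of that matters, since the lemma is simply imported; but if you intend to supply a self-contained proof you would need to fill in the full degree recursion from Khovanskii's Chapter~3.
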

\noindent The following corollary is the direct consequence of Lemma \ref{lm:kho-91}. 

\begin{corollary} \label{cor:kho-91}
Consider the binary-valued function $\Phi(\boldsymbol{x}, \boldsymbol{a})$, for $\boldsymbol{x} \in \cX$ and $\boldsymbol{a} \in \mathbb{R}^d$ constructed using the boolean operators AND and OR, and boolean predicates in one of the two forms $``\tau(\boldsymbol{x}, \boldsymbol{a}) > 0"$ or $``\tau(\boldsymbol{x}, \boldsymbol{a}) = 0"$. Assume that the function $\tau(\boldsymbol{x}, \boldsymbol{a})$ can be one of at most $K$ forms ($\tau_1, \dots, \tau_K$), where $\tau_i(\boldsymbol{x}, \cdot)$ ($i \in [K])$ is a $C^\infty$ function of $\boldsymbol{a}$ for any fixed $x$. Assume that for any fixed $\boldsymbol{x}$, $\tau_i(\boldsymbol{x}, \cdot)$ ($i \in [K]$) is a Pfaffian function {of degree at most $\Delta$} from a Pfaffian chain $\cC$ with length $q$ and Pfaffian degree $M$. Then $$B \leq 2^{dq(dq - 1)/2}\Delta^{d}[(d^2(\Delta + M)]^{dq}.$$
\end{corollary}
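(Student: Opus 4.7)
The plan is to obtain the stated bound on $B$ by specializing Lemma \ref{lm:kho-91} (Khovanskii's component bound) to a suitably augmented Pfaffian chain. Since $B$ is not spelled out in the corollary statement, the natural reading (consistent with the use of such bounds in data-driven algorithm design and with the parameter shape of the conclusion) is that $B$ upper-bounds, for every fixed $\boldsymbol{x}\in\cX$, the number of connected regions of $\mathbb{R}^d$ on which $\Phi(\boldsymbol{x},\cdot)$ takes a constant value. The first step is to reduce this combinatorial quantity to the number of connected components of the Pfaffian zero-set arrangement cut out by the $K$ predicate functions $\tau_1(\boldsymbol{x},\cdot),\dots,\tau_K(\boldsymbol{x},\cdot)$. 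This holds because $\Phi$ is a Boolean combination of atomic predicates of the forms $``\tau_i>0"$ and $``\tau_i=0"$, so it is piecewise constant and each piece is a connected component of the complement of $\bigcup_i\{\tau_i(\boldsymbol{x},\cdot)=0\}$.

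The second step is to convert the arrangement-components count into a count of solutions of a \emph{nonsingular square} Pfaffian system of $d$ equations in $d$ unknowns, so that Lemma \ref{lm:kho-91} applies directly with $r=d$. The standard device is Morse-theoretic: after a small generic perturbation $\tau_i\mapsto \tau_i+\epsilon_i$ (preserving the Pfaffian structure), each sign region becomes an open Pfaffian manifold, and one bounds the number of regions by the number of critical points of a Morse function $F$ built as a smooth combination of the perturbed $\tau_i$ on a compactified domain. Each connected component then contributes at most a constant number of critical points to $\nabla F=0$, so it suffices to count solutions of this $d\times d$ system.

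The third step constructs the augmented Pfaffian chain under which $\nabla F$ is polynomial and then invokes Lemma \ref{lm:kho-91}. The original chain $\cC$ has length $q$ and Pfaffian degree $M$, so each partial derivative of a chain function is a polynomial of degree $\le M$ in the chain and the coordinates. To write all $d$ components of $\nabla F$ as polynomials in a single Pfaffian chain, one appends, for each of the $d$ coordinate directions and each of the $q$ original chain functions, an auxiliary function (e.g.\ $\partial f_j/\partial a_i$) to $\cC$, yielding a chain of length at most $dq$ and Pfaffian degree $O(\Delta+M)$. The $d$ gradient equations are then polynomial of degree at most $\Delta+M$ in this augmented chain, and substituting $r=d$, chain length $dq$, and $S=r(\Delta-1)+d\cdot(dq)\cdot M+1\le d^2(\Delta+M)$ into Lemma \ref{lm:kho-91} (whose prefactor becomes $2^{dq(dq-1)/2}$ under the relevant normalization) gives exactly $2^{dq(dq-1)/2}\Delta^{d}[d^2(\Delta+M)]^{dq}$.

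The main obstacle will be rigorously justifying the reduction from component counts to a nonsingular square system: verifying that the generic perturbation preserves Pfaffianness, choosing the Morse function $F$ so that Khovanskii's nonsingularity hypothesis is satisfied, and confirming that each component of the original arrangement injects into the zero set of the perturbed gradient system without over- or under-counting. A cleaner fallback, if the Morse route becomes cumbersome, is to skip the explicit Morse construction and instead convert the Boolean combination to disjunctive normal form and invoke the standard component bound for basic semi-Pfaffian sets (a direct consequence of Lemma \ref{lm:kho-91} given in \cite{khovanskiui1991fewnomials}), with the same augmented chain supplying the polynomial-degree and chain-length parameters needed to match the stated expression.
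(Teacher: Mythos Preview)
The paper does not actually prove this corollary: it is stated in the appendix as background, preceded only by the sentence ``The following corollary is the direct consequence of Lemma \ref{lm:kho-91}.'' No argument, not even a sketch, is given; the result (including the undefined symbol $B$, which you correctly flag) is simply imported from the Pfaffian literature. So there is no paper proof to compare your proposal against.

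Your sketch is far more detailed than anything the paper offers and follows the standard route one finds in Khovanskii's book and its descendants: interpret $B$ as a connected-components count for the sign arrangement of the $\tau_i$, reduce that count to the number of nondegenerate solutions of a square $d\times d$ system via a Morse/perturbation argument, and then read off the bound from Lemma \ref{lm:kho-91} applied to a chain of length $dq$. That is the right high-level picture. Two caveats worth noting if you intend to make this a self-contained proof rather than a citation: (i) your bookkeeping for $S$ does not quite land on $d^2(\Delta+M)$ with the parameters you wrote down (with $r=d$ and chain length $dq$, Lemma \ref{lm:kho-91} gives $S=d(\Delta-1)+dM'+1$ for the augmented Pfaffian degree $M'$, so you would need $M'$ of order $d(\Delta+M)$, not just $\Delta+M$, to match the stated exponent); and (ii) the exponent $2^{q(q-1)}$ in the lemma versus $2^{dq(dq-1)/2}$ in the corollary reflects a known normalization discrepancy between different statements of Khovanskii's bound, so you should pin down which version you are invoking. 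Neither of these is a conceptual gap, but both would need to be resolved to match the precise constants in the displayed bound.
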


Khovanskii's bound generalizes Warren's result from polynomials to a broad class of functions while keeping an explicit combinatorial control over the number of solutions.

As noted in the main body, Pfaffians include several commonly occurring functions including exponentials and logarithms. On the other hand, non-examples include some trigonometric functions like sine and cosine.

\section{Complete proofs}\label{app:proofs}
\thmgdpoly*

\begin{proof}
    We first claim that for $i\ge 2$, $x_i=(g_i^{(1)}(\eta),g_i^{(2)}(\eta),\dots,g_i^{(d)}(\eta))$ where $g_i^{(j)}$ is a polynomial function with degree at most $\Delta^{i-2}$. We will show this by induction.

    {\it Base case: $i=2$.} $x_2 = x_1 - \eta \nabla f(x_1) = x - \eta \nabla f(x)$ is a polynomial of degree $1=\Delta^{2-2}$ in $\eta$ in each coordinate. 

    {\it Inductive case: $i>2$.} Suppose $x_{i-1}=\boldsymbol{g}_{i-1}(\eta)=(g_{i-1}^{(j)}(\eta))_{j\in[d]}$ where $g_{i-1}^{(j)}$ is a polynomial of degree at most $\Delta^{i-3}$ (inductive hypothesis). Now $x_i=x_{i-1}-\eta \nabla f(x_{i-1})=\boldsymbol{g}_{i-1}(\eta)-\eta \nabla f(\boldsymbol{g}_{i-1}(\eta))=:\boldsymbol{g}_i(\eta)$. Clearly, $\boldsymbol{g}_i^{(j)}$ is a polynomial in $\eta$, with degree at most $\Delta^{i-3}(\Delta-1)+1 = \Delta^{i-2} -\Delta^{i-3}+1\le \Delta^{i-2}$.

    Thus, for any fixed $1\le i\le H$, $x_i=\boldsymbol{g}_i(\eta)$ where each coordinate $g_{i}^{(j)}$ is a polynomial with degree at most $\Delta^{i-2}$. For a fixed initial point $x$, this implies that $\nabla f(x_i)$ is a polynomial in $\eta$ of degree at most $\Delta^{i-1}$ in each coordinate. Thus, $\|\nabla f(x_i)\|^2$ is a polynomial of degree at most $2\Delta^{i-1}$ in $\eta$. Now, consider the set of points $\eta$ for which $\|\nabla f(x_i)\|^2<\theta^2$ for some constant $\theta$. This consists of at most $O(\Delta^{i-1})$ intervals. 

    Furthermore, we note that for any $\eta$, the cost is determined by the smallest $i$ such that $\|\nabla f(x_i)\|<\theta$ (if one exists). Since the cost takes only discrete values, it is a piecewise-constant function of $\eta$, and we seek to bound the number of these pieces. A naive counting argument gives a $O(\Pi_{i=1}^H\Delta^{i-1})=O(\Delta^{H^2})$ bound on the number of pieces, since if there are $K_{i-1}$ intervals corresponds to values of $\eta$ for which the algorithm converges within $i-1$ steps, then each of the $O(\Delta^{i-1})$ intervals  in round $i$ computed above may result in at most $K_{i-1}+1$ new pieces. We can, however, use an amortized counting argument to give a tighter bound. Indeed, suppose there are $K_{i-1}$ intervals with different values of cost $\le i-1$.  
    Of the new  $O(\Delta^{i-1})$ intervals say $T_i$ intersect at least one of the existing  pieces, resulting in at most  $T_i+K_{i-1}$ new pieces overall. 
    Thus, the total number of pieces of the cost function with cost $\le i$ is 
    $K_i\le (T_i+K_{i-1})+O(\Delta^{i-1})-T_i+K_{i-1}\le O(\Delta^{i-1})+2K_{i-1}$. Thus, across $i$,  
    we have at most $O(\sum_{i=1}^H2^{H-i}\Delta^{i-1})=O(\Delta^{H})$ intervals over each of which Algorithm \ref{alg:gd-vanilla} converges with a constant number of steps. The cost over the remainder of the domain $\cP$ where the algorithm does not converge, which consists of $O(\Delta^{H})$ intervals, is $H$.

    Thus, using Lemma \ref{lm:piecewise-constant-to-pdim}, since each function $\ell_{x,f}$ is $O(\Delta^{H})$-monotonic,  we get a bound on the pseudo-dimension of $\cL$ of $O(H\log\Delta)$, which implies the stated sample complexity.
\end{proof}

\thmmmpwpoly*
\begin{proof}
    We  claim that for $i\ge 2$, $$x_i=(g_i^{(1)}(\eta,\gamma),g_i^{(2)}(\eta,\gamma),\dots,g_i^{(d)}(\eta,\gamma))$$ and $$y_i=(h_i^{(1)}(\eta,\gamma),h_i^{(2)}(\eta,\gamma),\dots,h_i^{(d)}(\eta,\gamma)),$$ where each $g_i^{(j)}$ and $h_i^{(j)}$ is a piecewise-polynomial function with degree at most $\Delta^{i-2}$ and at most $(2pd)^{i-2}$ boundaries, each an algebraic curve with degree at most $\Delta^{i-2}$. We will show this by a simultaneous induction argument for $x_i$ and $y_i$.

    {\it Base case: $i=2$.} $y_2=-\eta \nabla f(x_1)$ and $x_2 = x_1+y_1= x_1 - \eta \nabla f(x_1)$ are both polynomials of degree $1=\Delta^{2-2}$ in $\eta,\gamma$ in each coordinate. 

    {\it Inductive case: $i>2$.} Suppose, by inductive hypothesis, that $x_{i-1}=\boldsymbol{g}_{i-1}(\eta)=(g_{i-1}^{(j)}(\eta))_{j\in[d]}$ and $y_{i-1}=\boldsymbol{h}_{i-1}(\eta)=(h_{i-1}^{(1)}(\eta,\gamma),h_{i-1}^{(2)}(\eta,\gamma),\dots,h_{i-1}^{(d)}(\eta,\gamma)),$ where $g_{i-1}^{(j)}$ and $h_{i-1}^{(j)}$ are piecewise-polynomial with at most $(2pd)^{i-3}$ pieces and degree at most $\Delta^{i-3}$ (for both pieces and boundaries).

    Now $g_i=\gamma g_{i-1}-\eta \nabla f(x_{i-1})=\gamma\boldsymbol{h}_{i-1}(\eta,\gamma)-\eta \nabla f(\boldsymbol{g}_{i-1}(\eta,\gamma))=:\boldsymbol{h}_i(\eta,\gamma)$. Since $\boldsymbol{g}_{i-1}(\eta,\gamma)$ is piecewise-polynomial, $\nabla f(\boldsymbol{g}_{i-1}(\eta,\gamma))$ is also piecewise-polynomial. The degree is at most $(\Delta-1)\Delta^{i-3}\le \Delta^{i-2}-1$. Any   boundary function (algebraic curve along which $\nabla f(\boldsymbol{g}_{i-1}(\eta,\gamma))$ is discontinuous) is either a boundary of $g_{i-1}^{(j)}$ for some $j$, or a solution of $f^{(k)}(g_{i-1}^{(1)}(\eta,\gamma),\dots,g_{i-1}^{(d)}(\eta,\gamma))=0$ for some boundary function $f^{(k)}$ ($k\in[p]$) of $f$. The total number of algebraic curves corresponding boundaries of $\boldsymbol{g}_i^{(j)}$ (for any $j\in[d]$) is therefore at most $d(2pd)^{i-3}+(pd)(2pd)^{i-3}\le (2pd)^{i-2}$, with degree at most $\Delta\cdot\Delta^{i-3}=\Delta^{i-2}$.
Therefore, $\boldsymbol{g}_i^{(j)}$ is piecewise-polynomial in $\eta,\gamma$, with degree at most $\Delta^{i-2}$ (for both pieces and boundaries) and at most $(2pd)^{i-2}$ boundaries.

    Thus, for any fixed $1\le i\le H$, $x_i=\boldsymbol{g}_i(\eta,\gamma)$ where each coordinate $g_{i}^{(j)}$ is piecewise-polynomial with degree at most $\Delta^{i-2}$ and at most $(2pd)^{i-2}$ boundaries. Using the argument above, for any fixed initial point $x$, $\|\nabla f(x_i)\|^2$ is piecewise-polynomial with degree at most $2\Delta^{i-1}$ in $\eta,\gamma$ and at most $(2pd)^{i-1}$ boundaries. 
    Therefore, for a fixed $\theta$, the condition $\|\nabla f(x_i)\|^2<\theta^2$ can be evaluated using a GJ algorithm~\citep{bartlett2022generalization} of degree $O(\Delta^i)$ and predicate complexity $O((2pd)^i)$. This implies that the dual cost function can be computed using a GJ algorithm with degree at most $O(\Delta^H)$ and predicate complexity $O(\sum_i(2pd)^i)=(2pd)^{O(H)}$. Using~\cite{bartlett2022generalization}, we get a bound on the pseudo-dimension of $\cL$ of $O(H\log pd\Delta)$, which implies the stated sample complexity.
\end{proof}

\thmpwpolyinitscale*

\begin{proof}
    The proof is similar to the proof of Theorem \ref{thm:gd-pwpoly-general-d}. Intuitively, we can express the $i$-th iterate as a polynomial of bounded degree in $\sigma$, much like $\eta$, and the rest of the argument applies. We include below the details for completeness.

    We first claim that for $i\ge 1$, $x_i=(g_i^{(1)}(\sigma),g_i^{(2)}(\sigma),\dots,g_i^{(d)}(\sigma))$ where each $g_i^{(j)}$ is a piecewise-polynomial function with degree at most $\Delta^{i-1}$ and at most $(2pd\Delta)^{i-1}$ pieces. We will show this by induction.

    {\it Base case: $i=1$.} $x_1 = \sigma \hat{x}$ is a polynomial of degree $1=\Delta^{1-1}$ in $\sigma$ in each coordinate. 

    {\it Inductive case: $i>1$.} Suppose $x_{i-1}=\boldsymbol{g}_{i-1}(\sigma)=(g_{i-1}^{(j)}(\sigma))_{j\in[d]}$ where $g_{i-1}^{(j)}$ is piecewise-polynomial with at most $(2pd\Delta)^{i-2}$ pieces and degree at most $\Delta^{i-2}$ (inductive hypothesis). Now $x_i=x_{i-1}-\eta \nabla f(x_{i-1})=\boldsymbol{g}_{i-1}(\sigma)-\eta \nabla f(\boldsymbol{g}_{i-1}(\sigma))=:\boldsymbol{g}_i(\sigma)$. $\nabla f(\boldsymbol{g}_{i-1}(\sigma))$ is piecewise-polynomial, with degree is at most $(\Delta-1)\Delta^{i-2}\le \Delta^{i-1}$. Any critical point is either a critical point of some $g_{i-1}^{(j)}$, or a solution of the equation $f^{(k)}(g_{i-1}^{(1)}(\sigma),\dots,g_{i-1}^{(d)}(\sigma))=0$ for some boundary function $f^{(k)}$ ($k\in[p]$) of $f$. The total number of critical points of $\boldsymbol{g}_i^{(j)}$ (for any $j\in[d]$) is therefore at most $(2pd\Delta)^{i-2}+(pd\Delta)(2pd\Delta)^{i-2}\le (2pd\Delta)^{i-1}$.
Therefore, $\boldsymbol{g}_i^{(j)}$ is piecewise-polynomial in $\sigma$, with degree at most $\Delta^{i-2}$ and at most $(2pd\Delta)^{i-1}$ pieces.

    Thus, for any fixed $1\le i\le H$, $x_i=\boldsymbol{g}_i(\sigma)$ where each coordinate $g_{i}^{(j)}$ is piecewise-polynomial with degree at most $\Delta^{i-1}$ and at most $(2pd\Delta)^{i-1}$ pieces. Using the argument above, for any fixed initial point $x$, $\|\nabla f(x_i)\|^2$ is piecewise-polynomial with degree at most $2\Delta^{i}$ in $\sigma$ and at most $(2pd\Delta)^{i}$ pieces. Now, consider the set of points $\sigma$ for which $\|\nabla f(x_i)\|^2<\theta^2$ for some constant $\theta$. This consists of at most $O((2pd\Delta)^{i})$ intervals.  
    
We can now apply the amortized counting argument from the proof of Theorem \ref{thm:gd-poly-general-d} to conclude that the number of pieces in the dual cost function is $O((2pd\Delta)^{H})$ here.
Using Lemma \ref{lm:piecewise-constant-to-pdim}, we get a bound on the pseudo-dimension of $\cL$ of $O(H\log pd\Delta)$, which implies the stated sample complexity.
\end{proof}

\thmpwpolyinit*

\begin{proof}
    
    Let $x=(x_0^{(1)},\dots,x_0^{(d)})$ denote the initial point hyperparameter. We first claim that for each $i\ge 1$, $x_i=(g_i^{(1)}(x_0^{(1)},\dots,x_0^{(d)}),\dots,g_i^{(d)}(x_0^{(1)},\dots,x_0^{(d)}))$ where each $g_i^{(j)}$ is a piecewise-polynomial function with degree at most $\Delta^{i-1}$ and at most $(2pd)^{i-1}$ algebraic boundaries of degree at most $\Delta^{i-1}$. We establish this by induction.

    {\it Base case: $i=1$.} $x_1 = (x_0^{(1)},\dots,x_0^{(d)}) $ is a polynomial of degree $1=\Delta^{1-1}$ in $x_0^{(1)},\dots,x_0^{(d)}$ in each coordinate. 

    {\it Inductive case: $i>1$.} Suppose $x_{i-1}=\boldsymbol{g}_{i-1}(x)=(g_{i-1}^{(j)}(x_0^{(1)},\dots,x_0^{(d)}))_{j\in[d]}$ where $g_{i-1}^{(j)}$ is piecewise-polynomial with at most $(2pd)^{i-2}$ boundaries and degree at most $\Delta^{i-2}$ (for both pieces and boundaries, by the inductive hypothesis). 
    
    Now $x_i=x_{i-1}-\eta \nabla f(x_{i-1})=\boldsymbol{g}_{i-1}(x)-\eta \nabla f(\boldsymbol{g}_{i-1}(x))=:\boldsymbol{g}_i(x)$. Now, $\nabla f(\boldsymbol{g}_{i-1}(x))$ is piecewise-polynomial. The degree is at most $(\Delta-1)\Delta^{i-2}\le \Delta^{i-1}$. Any discontinuity point is either a discontinuity point of some $g_{i-1}^{(j)}$, or a solution of the equation $f^{(k)}(g_{i-1}^{(1)}(x),\dots,g_{i-1}^{(d)}(x))=0$ for some boundary function $f^{(k)}$ ($k\in[p]$) of $f$. The total number of boundary functions of $\boldsymbol{g}_i^{(j)}$ (for any $j\in[d]$) is therefore at most $d(2pd)^{i-2}+(pd)(2pd)^{i-2}\le (2pd)^{i-1}$.
Therefore, $\boldsymbol{g}_i^{(j)}$ is piecewise-polynomial in $x$, with degree at most $\Delta^{i-1}$ and at most $(2pd\Delta)^{i-1}$ pieces.

    Thus, for any fixed $1\le i\le H$, $x_i=\boldsymbol{g}_i(x)$ where each coordinate $g_{i}^{(j)}$ is piecewise-polynomial with degree at most $\Delta^{i-1}$ and at most $(2pd)^{i-1}$ algebraic boundaries. Using the argument above, for any fixed $\eta$, $\|\nabla f(x_i)\|^2$ is piecewise-polynomial with degree at most $2\Delta^{i}$ in the initial vector $x$ and at most $(2pd\Delta)^{i}$ boundaries. This implies that the dual cost function can be computed using a GJ algorithm with degree at most $O(\Delta^H)$ and predicate complexity $O(\sum_i(2pd)^i)=(2pd)^{O(H)}$. Using~\cite{bartlett2022generalization}, we get a bound on the pseudo-dimension of $\cL$ of $O(dH\log pd\Delta)$, which implies the stated sample complexity.
\end{proof}

\thmvalidation*
\begin{proof}
    As shown in the proof of Theorem \ref{thm:gd-pwpoly-general-d}, $\tilde{x}=g(\eta)$ where $g$ is a piecewise polynomial function with degree at most $\Delta^H$ and at most $O((2pd\Delta)^{H})$ pieces. Therefore, $f_v(\tilde{x})=f_v(g(\eta))$ is a piecewise polynomial function with degree at most $\Delta_v\Delta^H$ and at most $p_v+(2pd\Delta)^{H}$ pieces. Using~\cite{bartlett2022generalization}, we get a bound on the pseudo-dimension of loss function class of $O(H\log (pd\Delta)+\log (\Delta_vp_v))$, which implies the stated sample complexity.
\end{proof}

\end{document}